\theoremstyle{plain}
\theoremstyle{definition}
\theoremstyle{remark}
\newcommand\numberthis{\addtocounter{equation}{1}\tag{\theequation}}
\renewcommand{\d}[1]{\ensuremath{\operatorname{d}\!{#1}}}
\DeclareMathOperator{\Var}{Var}
\declaretheorem[name=Theorem,numberwithin=section]{thm}
\declaretheorem[name=Lemma,numberwithin=section]{lem}
\declaretheorem[name=Assumption]{asm}
\icmltitlerunning{Non-Normal Diffusion Models}
\begin{document}

\twocolumn[
\icmltitle{Non-Normal Diffusion Models}




\begin{icmlauthorlist}
\icmlauthor{Henry Li}{yale}
\end{icmlauthorlist}

\icmlaffiliation{yale}{Department of Computer Science and Applied Mathematics, Yale University, New Haven, CT}

\icmlcorrespondingauthor{Henry Li}{henry.li@yale.edu}

\icmlkeywords{diffusion models, generative modeling, likelihood modeling, density estimation}

\vskip 0.3in
]




\begin{abstract}
Diffusion models generate samples by incrementally reversing a process that turns data into noise. We show that when the step size goes to zero, the reversed process is invariant to the distribution of these increments. This reveals a previously unconsidered parameter in the design of diffusion models: the distribution of the diffusion step $\boldsymbol\Delta \mathbf{x}_k := \mathbf{x}_{k} - \mathbf{x}_{k + 1}$. This parameter is implicitly set by default to be normally distributed in most diffusion models. By lifting this assumption, we generalize the framework for designing diffusion models and establish an expanded class of diffusion processes with greater flexibility in the choice of loss function used during training. We demonstrate the effectiveness of these models on density estimation and generative modeling tasks on standard image datasets, and show that different choices of the distribution of $\boldsymbol\Delta \mathbf{x}_k$ result in qualitatively different generated samples.
\end{abstract}

\section{Introduction}
Diffusion models \cite{sohl2015deep,ho2020denoising,song2020score,vahdat2020nvae,dhariwal2021diffusion} have quickly established themselves as one of the most powerful classes of generative models in an already crowded and competitive space --- one which also includes GANs \cite{goodfellow2020generative,brock2018large,karras2019style}, VAEs \cite{kingma2013auto,vahdat2020nvae,child2020very}, flows \cite{dinh2014nice,kingma2018glow,dinh2016density}, and autoregressive models \cite{salimans2017pixelcnn++,oord2016wavenet,child2019generating}, among others. 


A standard assumption for diffusion models is that $\boldsymbol\Delta_{\mathbf{x}_k} := \mathbf{x}_{k} - \mathbf{x}_{k + 1}$ are normally distributed \cite{sohl2015deep,ho2020denoising,song2020score,ho2022cascaded}.  However, there are many known cases in physical and biological systems where the random incremental behavior of particles colliding in a space does not follow the standard Gaussian distribution \cite{hidalgo2020hitchhiker,cugliandolo2002dynamics}. These examples are also called anomalous diffusions \cite{gefen1983anomalous,bouchaud1990anomalous}. In this work, we consider such a scenario, and propose a generalized framework for modeling diffusion models with minimal assumptions on the distribution of the $\boldsymbol\Delta_{\mathbf{x}_k}$. To develop this framework, we prove a novel result on the convergence of non-time homogeneous random walks to stochastic processes in the limit of small time steps. Finally, we demonstrate that our  framework allows for greater freedom in the design of the model and its training dynamics, while retaining competitive generative modeling capabilities in terms of both model likelihood and sample quality.

\section{Background}


Diffusion models \cite{sohl2015deep,ho2020denoising,song2020score} take the form $p_\theta(\mathbf{x}) = \int p_\theta(\mathbf{x}_{0:T}) \d{\mathbf{x}}_{1:T}$ where data $\mathbf{x}_0 := \mathbf{x}$ are related to a set of latent variables $\mathbf{x}_{1:T} := (\mathbf{x}(t_{1}), \dots, \mathbf{x}(t_T))$ distributed as marginals of a diffusion process governed by an It\^{o} stochastic differential equation (SDE)
\begin{equation}
    \d{\mathbf{x}} = \mathbf{f}(\mathbf{x}, t)\d{t} + g(t)\d{\mathbf{w}}
    \label{eq:ito_sde}
\end{equation}
with respect to time points $\{t_k\}_{k=1}^T$. $\mathbf{f}$ and $g$ are typically called \textit{drift} and \textit{diffusion} functions, and $\mathbf{w}$ is the standard Wiener process. Samples can then be generated by modeling the reverse diffusion, which has a simple form given by \cite{anderson1982reverse}
\begin{equation}
    \d{\mathbf{x}} = [\mathbf{f}(\mathbf{x}, t) - g(t)^2 \underbrace{\nabla_\mathbf{x} \log p(\mathbf{x}, t)}_{\approx \mathbf{s}_\theta(\mathbf{x}, t)}]\d{t} + g(t)\d{\overline{\mathbf{w}}},
    \label{eq:reverse_sde}
\end{equation}
where $\overline{\mathbf{w}}$ is a reverse-time Wiener process. Note that Eq. (\ref{eq:reverse_sde}) is itself an It\^{o} SDE of the form Eq. (\ref{eq:ito_sde}). Training the diffusion model involves approximating the true score function $\nabla_\mathbf{x} \log p(\mathbf{x}, t)$ with a neural network $\mathbf{s}_\theta(\mathbf{x}, t)$ in Eq. (\ref{eq:reverse_sde}). This can be achieved directly via score matching \cite{hyvarinen2005estimation,song2019generative,song2020score}, or by modeling the sampling process \cite{sohl2015deep,ho2020denoising,kingma2021variational}, which is obtained by discretizing the reverse-time SDE into a Markov chain with joint likelihood
\begin{equation}
    p_\theta(\mathbf{x}_{0:T}) = p(\mathbf{x}_T) \prod_{k=0}^{T-1} \nu_\theta(\mathbf{x}_k | \mathbf{x}_{k + 1})
    \label{eq:markov_approximation2}
\end{equation}
or equivalently
\begin{equation}
    p_\theta(\mathbf{x}_{0:T}) = p(\mathbf{x}_T) \prod_{k=0}^{T-1} \rho_\theta(\boldsymbol\Delta_{\mathbf{x}_k} | \mathbf{x}_{k + 1}),
    \label{eq:markov_approximation}
\end{equation}
where $\nu_\theta, \rho_\theta$ are Markov models and $\boldsymbol\Delta_{\mathbf{x}_k} := \mathbf{x}_{k+1} - \mathbf{x}_k$. While most works e.g. \cite{song2020score,ho2020denoising,kingma2021variational} model Eq. \eqref{eq:markov_approximation2}, we shall turn our attention to the equivalent formulation Eq. \eqref{eq:markov_approximation}, which focuses on the \textit{increments}, rather than the \textit{marginals} of the diffusion. Letting $q$ be the density of the Gaussian process Eq. \ref{eq:reverse_sde}, Eqs. (\ref{eq:markov_approximation2}) and  (\ref{eq:markov_approximation}) result in the same likelihood bound
\begin{align*}
    \log p_\theta(\mathbf{x}) &\geq \mathbb{E}_q\bigg[\underbrace{\log p(\mathbf{x}_0|\mathbf{x}_1)}_{\mathcal{L}_0} \\
    &- \sum_{k=1}^T \underbrace{KL(q(\boldsymbol\Delta_{\mathbf{x}_k} | \mathbf{x}_{k + 1}) || p_\theta(\boldsymbol\Delta_{\mathbf{x}_k} | \mathbf{x}_{k + 1}))}_{\mathcal{L}_k} \\
    &- \underbrace{KL(q(\mathbf{x}_T) || p(\mathbf{x}_T))}_{\mathcal{L}_T} \bigg]
    \numberthis
    \label{eq:ll_bound}
\end{align*}
that reduces to a simple function of $\mathbf{s}_\theta(\mathbf{x}, t)$.

When forming approximations such as Eq. (\ref{eq:markov_approximation}), it is important to consider the conditions under which they converge to Eq. (\ref{eq:reverse_sde}). While this convergence is known for normally distributed $\boldsymbol\Delta_{\mathbf{x}_k}$ \cite{sohl2015deep,song2020score,sarkka2019applied}, we shall extend this result to arbitrarily distributed $\boldsymbol\Delta_{\mathbf{x}_k}$ in Section \ref{sec:random_walk_limit}. 


Ultimately, either choice of learning $\mathbf{s}_\theta(\mathbf{x}, t)$ allows for unbiased estimates of $\log p_\theta(\mathbf{x})$ by modeling the probability flow ODE (PF-ODE) corresponding to Eq. (\ref{eq:reverse_sde}), which can be derived via the Fokker-Planck equation \cite{song2020score}
\begin{equation}
    \d{\mathbf{x}} = \left[\mathbf{f}(\mathbf{x}, t) - \frac{1}{2}g(t)^2 \nabla_\mathbf{x} \log p(\mathbf{x}, t)\right]\d{t},
\end{equation}
and substituting the score with $\mathbf{s}_\theta(\mathbf{x}, t)$.

\begin{table*}
  \centering
  \resizebox{\textwidth}{!}{
  \begin{tabular}{cccc}
  \toprule
  $p$ & $q$ & $KL(p(\mathbf{x}) || q(\mathbf{x}))$ & $\mathcal{L}_k$ (note: **) \\
  \midrule
    $\mathcal{N}(\boldsymbol\mu_1, \sigma^2)$ & $\mathcal{N}(\boldsymbol\mu_2, \sigma^2)$ & $\frac{1}{2 \sigma^2} ||\boldsymbol\mu_1 - \boldsymbol\mu_2||^2$ & $\mathbb{E} \left[ w_k ||\mathbf{r}_k||^2\right]$ \\
    $\text{Laplace}(\boldsymbol\mu_1, \sigma^2)$ & $\text{Laplace}(\boldsymbol\mu_2, \sigma^2)$ & $\exp\left(-\frac{|\mu_2 - \mu_1|}{\sigma}\right) + \frac{|\mu_2 - \mu_1|}{\sigma} + 1$ & $\mathbb{E} 
\big[\exp(-v_k||\mathbf{r}_k||_1) - 1 + v_k||\mathbf{r}_k||_1\big]$ \\
    $\text{Uniform}[\boldsymbol\mu_1 - \sqrt{3}\sigma, \boldsymbol\mu_1 + \sqrt{3}\sigma]$ & $\mathcal{N}(\boldsymbol\mu_2, \sigma^2)$ & $\frac{1}{2}\left(\frac{1}{\sigma^2}  (\mu_1 - \mu_2)^2 + \log \frac{\pi}{6} + 1 \right)$ & $w_k \mathbb{E}_{\boldsymbol\epsilon} ||\mathbf{r}_k||^2 + \frac{1}{2}(1 + \log \sqrt{\frac{\pi}{6}})$ \\
    $\text{Uniform}[\boldsymbol\mu_1 - \sqrt{3}\sigma, \boldsymbol\mu_1 + \sqrt{3}\sigma]$ & $\text{Laplace}(\boldsymbol\mu_2, \sigma^2)$ & $\begin{cases} \frac{1}{2\sigma^2} (\mu_1 - \mu_2)^2 + \frac{1}{2} & \mu_2 \in A^* \\ \frac{1}{\sigma}|\mu_1 - \mu_2| & \mu_2 \notin A\end{cases}$ & $\begin{cases}
w_k \mathbb{E} ||\mathbf{r}_k||^2_2 + \frac{1}{2} & \text{if} \hspace{.1in} \boldsymbol\epsilon_\theta(\mathbf{x}, t) \in A \\
v_k \mathbb{E}||\mathbf{r}_k||_1 & \text{if} \hspace{.1in} \boldsymbol\epsilon_\theta(\mathbf{x}, t) \notin A \\
\end{cases}$ \\
\bottomrule
  \end{tabular}
  }
  \caption{Summary of the diffusion models proposed in Section \ref{sec:gdpm}. *$A = [\mu_1 - b_1, \mu_1 + b_1]$. **$\mathbf{r}_k := \boldsymbol\epsilon - \boldsymbol\epsilon_\theta(\mathbf{x}_k, t_k)$.}
  \label{tab:1}
\end{table*}
\section{Convergence of Non-Normal Random Walks to Diffusion Processes}
\label{sec:random_walk_limit}
A fundamental challenge in diffusion modeling is forming tractable approximations to Eq. (\ref{eq:ito_sde}). Our result is inspired by Donsker's classic Invariance Principle \cite{billingsley2013convergence}, which gives the functional convergence of an unbiased random walk to a standard Brownian motion. We now consider a time-inhomogeneous, biased random walk $\mathbf{x}_k$. Let $\mathbf{x}(t)$ be the solution to Eq. ~\eqref{eq:ito_sde}. Intuitively, one might expect a similar convergence of $\mathbf{x}_k$ to $\mathbf{x}(t)$ if we constrain the first and second moments of its increments $\boldsymbol\Delta_{\mathbf{x}_k} := \mathbf{x}_{k+1} - \mathbf{x}_k$ to be
\begin{equation}
\begin{split}
\mathbb{E}[\boldsymbol\Delta_{\mathbf{x}_k} | \mathbf{x}_k] = \mathbf{f}(\mathbf{x}_k, t_k) \Delta_{t_k} \\
\Var(\boldsymbol\Delta_{\mathbf{x}_k} | \mathbf{x}_k) = g(t_k)^2 \Delta_{t_k} \label{eq:markov_1_2_moments}
\end{split}.
\end{equation}

This type of convergence has been previously explored for normally distributed $\boldsymbol\Delta_{\mathbf{x}_k}$ in diffusion modeling \cite{sohl2015deep,ho2020denoising,song2020score}, and is well known in general SDE literature \cite{sarkka2019applied,oksendal2003stochastic,kloeden1992stochastic}. More generalized results also exist for time-homogeneous or equilibrium state processes \cite{ethier2009markov,vidov2009analytical,stroock2013introduction}. However, there does not exist to our knowledge a convergence result for the case of general $\boldsymbol\Delta_{\mathbf{x}_k}$ in our non-equilibrium case \cite{sohl2015deep}. Here we shall provide such a result, and show that convergence occurs with surprisingly few assumptions. This inspires a generalized framework for designing diffusion probabilistic models where the distribution of $\boldsymbol\Delta_{\mathbf{x}_k}$ is left as a tunable free parameter. We leverage this framework in Section \ref{sec:gdpm} to define a generalized class of diffusion probabilistic models.

\subsection{Structured Random Walks}
\label{sec:srw}
Let $\mathbf{x}_k$ be a random walk. We introduce the following notion of structure, which allows us to characterize a random walk entirely in terms of the drift and diffusion functions $\mathbf{f}$ and $g$, the time step $\Delta_{t_k}$, and a sequence of independent variables $\mathbf{z}_k$.

\begin{restatable}[Structured Random Walks]{defn}{srw}
\label{defn:srw}
We say that a random walk $\mathbf{x}_k$ is \textbf{structured} (with respect to an It\^{o} SDE) when its increments $\boldsymbol\Delta_{\mathbf{x}_k} := \mathbf{x}_{k + 1} - \mathbf{x}_{k}$ support the decomposition
\begin{equation}
    \boldsymbol\Delta_{\mathbf{x}_k} = \mathbf{f}(\mathbf{x}_k, t_k)\Delta_{t_k} + g(t_k) \sqrt{\Delta_{t_k}} \mathbf{z}_k,
    \label{eq:structured_random_walk}
\end{equation}
where $\mathbb{E}[\mathbf{z}_k] = 0$, $\Var(\mathbf{z}_k) = 1$, $\Delta_{t_k} := t_{k+1} - t_k$, and $\mathbf{f}$, $g$ correspond to the drift and diffusion terms of the respective It\^{o} SDE.

\end{restatable}



The structural property in Definition \ref{defn:srw} is quite natural. In fact, it is how diffusion steps are usually computed, e.g., via the reparameterization trick \cite{kingma2013auto,ho2020denoising} or SDE solvers such as the Euler-Maruyama method \cite{song2020score}. Moreover, it satisfies Eq. (\ref{eq:markov_1_2_moments}). If we additionally assume that $\mathbf{f}(\mathbf{x}, t)$ is linear in $\mathbf{x}$, as is the case with the forward diffusion process in standard diffusion models \cite{sohl2015deep,ho2020denoising,song2020score,kingma2021variational}, we have the following closed form representations of its first and second moments at all $k \in \{0, \dots, T\}$.

\begin{restatable}[Moments of Structured Random Walks]{thm}{srwmoments}
\label{thm:random_walk_mu_sigma}
Let $\mathbf{x}_k$ be a structured random walk and $\mathbf{f}(\mathbf{x}, t_k) = \beta(t_k) \mathbf{x}$ be linear. Then
\begin{equation*}
    \boldsymbol\mu(t_k) := \mathbb{E} [\mathbf{x}_k] = \bar{\alpha}_k \mathbf{x}_0 \hspace{.1in} \text{and} \hspace{.1in} \boldsymbol\sigma(t_k)^2 := \Var(\mathbf{x}_k) = \bar{\gamma}_k,
\end{equation*}
where $\bar{\alpha}_k = \prod_{i=1}^k \left(1 + \beta_i \right)$ and $\bar{\gamma}_k = \sum_{i=1}^k \left(\frac{\bar{\alpha}_k}{\bar{\alpha}_{i+1}} g_i\right)^2$. For notational convenience, we let $\beta_i := \beta(t_i)\Delta_{t_k}$ and $g_i := g(t_i) \sqrt{\Delta_{t_k}}$.
\end{restatable}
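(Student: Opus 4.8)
The plan is to unroll the structured random walk recursion into an explicit affine function of the independent noise variables $\mathbf{z}_1,\dots,\mathbf{z}_k$ and then read off its first two moments. Substituting the linear drift $\mathbf{f}(\mathbf{x},t_k) = \beta(t_k)\mathbf{x}$ into the decomposition Eq.~\eqref{eq:structured_random_walk} and rearranging gives the one-step linear update $\mathbf{x}_k = (1+\beta_k)\,\mathbf{x}_{k-1} + g_k\,\mathbf{z}_k$, with $\beta_i,g_i$ as defined in the statement. Iterating this time-inhomogeneous recurrence down to the (deterministic) initial point $\mathbf{x}_0$ produces
\begin{equation*}
\mathbf{x}_k = \Big(\prod_{i=1}^k (1+\beta_i)\Big)\mathbf{x}_0 + \sum_{i=1}^k \Big(\prod_{j=i+1}^k (1+\beta_j)\Big)\, g_i\,\mathbf{z}_i,
\end{equation*}
with the convention that an empty product equals $1$.

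For the mean I would take expectations and induct on $k$: since $\mathbb{E}[\mathbf{z}_k] = 0$ and $\mathbf{z}_k$ is independent of $\mathbf{x}_{k-1}$ (which is a measurable function of $\mathbf{x}_0$ and $\mathbf{z}_1,\dots,\mathbf{z}_{k-1}$, all independent of $\mathbf{z}_k$ by Definition~\ref{defn:srw}), the update gives $\mathbb{E}[\mathbf{x}_k] = (1+\beta_k)\,\mathbb{E}[\mathbf{x}_{k-1}]$, and with base case $\mathbb{E}[\mathbf{x}_0] = \mathbf{x}_0$ this telescopes to $\boldsymbol\mu(t_k) = \bar\alpha_k\mathbf{x}_0$. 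For the variance, the same independence together with $\Var(\mathbf{z}_k) = 1$ turns the update into the (coordinate-wise) scalar recurrence $\Var(\mathbf{x}_k) = (1+\beta_k)^2\,\Var(\mathbf{x}_{k-1}) + g_k^2$ with $\Var(\mathbf{x}_0) = 0$. Solving this first-order linear recurrence --- equivalently, using mutual independence of the $\mathbf{z}_i$ to annihilate the cross-covariances in the unrolled expression above and summing the squared coefficients, each weighted by $\Var(\mathbf{z}_i) = 1$ --- gives $\boldsymbol\sigma(t_k)^2 = \sum_{i=1}^k g_i^2 \prod_{j=i+1}^k (1+\beta_j)^2$, and rewriting the telescoping product $\prod_{j=i+1}^k (1+\beta_j)$ as a ratio of the $\bar\alpha$'s yields the stated $\bar\gamma_k$.

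None of these steps is deep. The only points that genuinely require care are (i) the independence bookkeeping that lets $\mathbf{z}_k$ decouple from $\mathbf{x}_{k-1}$ in both recursions and kills the cross terms in the variance --- this is exactly where the ``sequence of independent variables'' clause of Definition~\ref{defn:srw} enters, and it is why the result holds for \emph{arbitrarily} distributed increments, since only the first two moments of $\mathbf{z}_k$ ever appear --- and (ii) matching the index ranges of the telescoping products to the stated $\bar\alpha_k$ and $\bar\gamma_k$, including the $k=0$ empty-product/empty-sum conventions. I expect (ii) to be the place a careful reader slows down, although it is purely mechanical; there is no real analytic obstacle here.
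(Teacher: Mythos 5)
Your proposal is correct and follows essentially the same route as the paper's proof: a one-step moment recursion obtained from the linear update $\mathbf{x}_k = (1+\beta_k)\mathbf{x}_{k-1} + g_k\mathbf{z}_k$, using independence of $\mathbf{z}_k$ from $\mathbf{x}_{k-1}$ together with $\mathbb{E}[\mathbf{z}_k]=0$, $\Var(\mathbf{z}_k)=1$ to kill the cross terms, then telescoping/induction with $\Var(\mathbf{x}_0)=0$ (the paper routes through $\mathbb{E}[\mathbf{x}_k^2]$ and subtracts $\mathbb{E}[\mathbf{x}_k]^2$, which is the same computation). One small note: your closed form $\sum_{i=1}^k g_i^2\prod_{j=i+1}^k(1+\beta_j)^2 = \sum_{i=1}^k\left(\frac{\bar\alpha_k}{\bar\alpha_i}g_i\right)^2$ agrees with the paper's own derivation, and the $\bar\alpha_{i+1}$ appearing in the theorem statement is an off-by-one indexing slip there, not a gap in your argument.
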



In diffusion modeling, we are not just interested in computing the moments of $\mathbf{x}_k$ --- we would like to sample from $p(\mathbf{x}_k)$\footnote{Where $p = q$ or $p = p_\theta$.}. This is a difficult task for generally distributed $\boldsymbol\Delta_{\mathbf{x}_k}$, since the distribution of $\mathbf{x}_k = \mathbf{x}_0 + \sum_{i=1}^k \boldsymbol\Delta_{\mathbf{x}_i}$ is usually intractable. To sidestep this issue, many works assume that $\boldsymbol\Delta_{\mathbf{x}_k}$ are normally distributed; since Gaussian random variables are closed under summation and specified by their first and second moments, we see below that Lemma \ref{thm:random_walk_mu_sigma} is sufficient for identifying the distribution of $\mathbf{x}_k$.

\begin{restatable}{cor}{normaldiffusions}
\label{thm:normal_diffusions}
Let $\mathbf{x}_k$ be a structured random walk, $\mathbf{z}_k$ be normally distributed, and $\mathbf{f}(\mathbf{x}, t) = \beta(t) \mathbf{x}$ where $\beta(t)$ is one of the noise schedules defined in \cite{sohl2015deep,ho2020denoising,kingma2021variational}. Then we recover their respective forward processes.
\end{restatable}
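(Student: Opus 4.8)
The plan is to reduce the statement to Lemma~\ref{thm:random_walk_mu_sigma} together with the elementary closure properties of the Gaussian family. First I would specialize Definition~\ref{defn:srw} to the linear drift $\mathbf{f}(\mathbf{x}, t_k) = \beta(t_k)\mathbf{x}$, which turns the increment decomposition into the affine recursion $\mathbf{x}_{k+1} = (1 + \beta_k)\mathbf{x}_k + g_k \mathbf{z}_k$, with $\beta_k := \beta(t_k)\Delta_{t_k}$ and $g_k := g(t_k)\sqrt{\Delta_{t_k}}$ as in Lemma~\ref{thm:random_walk_mu_sigma}. Unrolling this recursion from $\mathbf{x}_0$ writes $\mathbf{x}_k$ as a fixed multiple of $\mathbf{x}_0$ plus a fixed linear combination of the independent variables $\mathbf{z}_1, \dots, \mathbf{z}_{k}$. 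Under the hypothesis that each $\mathbf{z}_i$ is standard normal, that linear combination is again normal, so $\mathbf{x}_k$ conditioned on $\mathbf{x}_0$ is Gaussian; being Gaussian, it is completely determined by its first two moments. Those moments are exactly what Lemma~\ref{thm:random_walk_mu_sigma} supplies in closed form, giving $\mathbf{x}_k \mid \mathbf{x}_0 \sim \mathcal{N}(\bar\alpha_k \mathbf{x}_0,\, \bar\gamma_k I)$ with $\bar\alpha_k = \prod_{i=1}^k(1+\beta_i)$ and $\bar\gamma_k = \sum_{i=1}^k (\tfrac{\bar\alpha_k}{\bar\alpha_{i+1}} g_i)^2$.

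It then remains to match $(\bar\alpha_k, \bar\gamma_k)$ to the marginal parameters of each published forward process. For the schedules of \cite{sohl2015deep,ho2020denoising}, the forward step is written $\mathbf{x}_k = \sqrt{1-\beta^{\star}_k}\,\mathbf{x}_{k-1} + \sqrt{\beta^{\star}_k}\,\boldsymbol\epsilon$; comparing coefficients with our recursion forces $1+\beta_k = \sqrt{1-\beta^{\star}_k}$ and $g_k = \sqrt{\beta^{\star}_k}$. Then $\bar\alpha_k = \prod_i \sqrt{1-\beta^{\star}_i} = \sqrt{\bar\alpha^{\star}_k}$, matching the published mean $\sqrt{\bar\alpha^{\star}_k}\,\mathbf{x}_0$ exactly. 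For the variance I would avoid expanding the sum directly and instead prove the variance-preserving identity $\bar\alpha_k^2 + \bar\gamma_k = 1$ by induction on $k$: the base case is $\bar\alpha_0 = 1$, $\bar\gamma_0 = 0$, and the inductive step follows from $\bar\alpha_{k+1} = (1+\beta_k)\bar\alpha_k$, $\bar\gamma_{k+1} = (1+\beta_k)^2\bar\gamma_k + g_k^2$, and the fact that $(1+\beta_k)^2 + g_k^2 = (1-\beta^{\star}_k) + \beta^{\star}_k = 1$. This yields $\bar\gamma_k = 1 - \bar\alpha^{\star}_k$ and hence $q(\mathbf{x}_k \mid \mathbf{x}_0) = \mathcal{N}(\sqrt{\bar\alpha^{\star}_k}\,\mathbf{x}_0,\, (1-\bar\alpha^{\star}_k) I)$, the DDPM forward marginal. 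The continuous-time schedule of \cite{kingma2021variational} is handled identically, identifying $\bar\alpha_k$ and $\bar\gamma_k$ with its signal and noise coefficients $\alpha_t, \sigma_t^2$ and again invoking its variance-preserving constraint $\alpha_t^2 + \sigma_t^2 = 1$.

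I expect the only real work to be bookkeeping rather than anything substantive: each cited paper uses its own parameterization ($\sqrt{1-\beta}$ multipliers, variance-preserving normalizations, continuous-time signal-to-noise parameterizations), so the effort is in pinning down the change of variables between their coefficients and our $\beta_k = \beta(t_k)\Delta_{t_k}$, $g_k = g(t_k)\sqrt{\Delta_{t_k}}$, and in checking that the index ranges in $\prod_{i=1}^k$ and $\sum_{i=1}^k$ line up with the telescoping products and sums used in each paper. The one genuine hypothesis in play beyond Lemma~\ref{thm:random_walk_mu_sigma} is the Gaussianity of $\mathbf{z}_k$, which is precisely what makes two moments a complete description of $p(\mathbf{x}_k)$, and therefore what makes ``recovering the forward process'' equivalent to ``matching $\bar\alpha_k$ and $\bar\gamma_k$''; for non-Gaussian $\mathbf{z}_k$ this equivalence fails, which is exactly the point of the rest of the paper.
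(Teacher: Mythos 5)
Your proposal is correct and follows essentially the same route as the paper's appendix proof: identify each published forward step (e.g., $\mathbf{x}_{k+1}=\sqrt{1-\beta^{\star}_k}\,\mathbf{x}_k+\sqrt{\beta^{\star}_k}\,\boldsymbol\epsilon$, and the VDM sampling chain) as a structured random walk with linear drift, then invoke Lemma~\ref{thm:random_walk_mu_sigma} plus Gaussian closure so that matching $\bar\alpha_k$ and $\bar\gamma_k$ recovers the marginals. Your one refinement is the inductive variance-preserving identity $\bar\alpha_k^2+\bar\gamma_k=1$, which gives the DDPM variance $1-\bar\alpha^{\star}_k$ exactly, whereas the paper only asserts that its expression for $\bar\gamma_k$ ``converges numerically'' to that form.
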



\subsection{An Invariance Principle}
Lifting the assumption of normally distributed increments $\boldsymbol\Delta_{\mathbf{x}_k}$, we show that we still ultimately obtain a Gaussian process in the limit as $\Delta_{t_k} \rightarrow 0$. Much like the aforementioned Donsker's theorem, this also gives rise to an invariance --- in the distribution of $\boldsymbol\Delta_{\mathbf{x}_k}$. We once again leverage the notion of \textit{structured} random walks to present a general theorem for the convergence of Markov chains with increments of the form Eq. (\ref{eq:structured_random_walk}).


\begin{restatable}[Structured Invariance Principle]{thm}{invariance}
\label{thm:main}
Suppose regularity conditions hold and $\{\mathbf{x}_k\}_{k=1}^n$ is a structured random walk on $\mathbb{R}^d$. Let $\overline{\mathbf{x}}_T(t) = \mathbf{x}_0 + \sum_{k=1}^{n_t} \boldsymbol\Delta_{\mathbf{x}_k}$ be the continuous-time c\`{a}dl\`{a}g extension of $\mathbf{x}_k$, where $n_t = \lfloor t * T \rfloor$. Then $\overline{\mathbf{x}}_T$ converges in distribution to $\mathbf{x}(t)$, as $\Delta_{t_k}\rightarrow 0$.
\end{restatable}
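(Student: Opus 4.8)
The plan is to realize $\overline{\mathbf{x}}_T$ as a Markov-chain approximation of the diffusion and invoke the classical machinery relating convergence of Markov chains to solutions of martingale problems \cite{ethier2009markov,stroock2013introduction}. Because the SDE \eqref{eq:ito_sde} is time-inhomogeneous, the first move is to pass to the space--time process $\widehat{\mathbf{x}}_T(t) := (t, \overline{\mathbf{x}}_T(t))$, whose increments remain structured, so the target becomes the \emph{autonomous} diffusion on $[0,1]\times\mathbb{R}^d$ with generator
\[
\mathcal{A}f(t,x) = \partial_t f(t,x) + \mathbf{f}(x,t)\cdot\nabla_x f(t,x) + \tfrac{1}{2}g(t)^2\,\Delta_x f(t,x).
\]
The regularity conditions I would assume are the usual well-posedness hypotheses: $\mathbf{f}$ Lipschitz in $x$ uniformly in $t$ and continuous in $t$, $g$ continuous with at most linear growth, a uniformly vanishing mesh $\max_k \Delta_{t_k} \to 0$, and a uniform higher-moment bound $\sup_k \mathbb{E}\big[|\mathbf{z}_k|^{2+\delta}\big] < \infty$ for some $\delta > 0$. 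This last hypothesis is the only place the law of $\boldsymbol\Delta_{\mathbf{x}_k}$ enters, and it is exactly what yields the claimed invariance, just as a finite-variance hypothesis does in Donsker's theorem.

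\emph{Step 1 (infinitesimal characteristics and generator convergence).} From the structured decomposition \eqref{eq:structured_random_walk} with $\mathbb{E}[\mathbf{z}_k] = 0$ and $\Var(\mathbf{z}_k) = I$, one reads off
\[
\tfrac{1}{\Delta_{t_k}}\mathbb{E}\!\left[\boldsymbol\Delta_{\mathbf{x}_k} \mid \mathbf{x}_k = x\right] = \mathbf{f}(x, t_k), \qquad \tfrac{1}{\Delta_{t_k}}\Var\!\left(\boldsymbol\Delta_{\mathbf{x}_k} \mid \mathbf{x}_k = x\right) = g(t_k)^2 I,
\]
while the moment bound gives the Lindeberg-type estimate $\tfrac{1}{\Delta_{t_k}}\mathbb{E}\big[|\boldsymbol\Delta_{\mathbf{x}_k}|^{2+\delta} \mid \mathbf{x}_k = x\big] = O(\Delta_{t_k}^{\delta/2}) \to 0$, which annihilates any non-Gaussian structure carried by $\mathbf{z}_k$. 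A second-order Taylor expansion of $f \in C_c^\infty$, with remainder controlled by this $(2+\delta)$-moment, then shows that the one-step generators $\mathcal{A}_T f(x) := \Delta_{t_k}^{-1}\mathbb{E}[f(\mathbf{x}_{k+1}) - f(\mathbf{x}_k) \mid \mathbf{x}_k = x]$ converge to $\mathcal{A}f$ uniformly on compact sets.

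\emph{Step 2 (tightness and identification of the limit).} To establish tightness of $\{\overline{\mathbf{x}}_T\}$ in the Skorokhod space $D([0,1], \mathbb{R}^d)$, I would split the walk into its predictable drift part and its martingale part and prove an increment bound of the form
\[
\mathbb{E}\big[\,|\overline{\mathbf{x}}_T(t) - \overline{\mathbf{x}}_T(s)|^{2+\delta}\,\big] \le C\,|t-s|^{1 + \delta/2},
\]
using the Burkholder--Davis--Gundy inequality on the martingale part and a Gr\"onwall argument (via linear growth of $\mathbf{f}$) on the drift part; the Kolmogorov--Chentsov / Aldous criterion then gives tightness \cite{billingsley2013convergence}. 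Combined with Step 1, every weak subsequential limit of $\overline{\mathbf{x}}_T$ solves the martingale problem for $\mathcal{A}$; the well-posedness hypotheses make this solution unique --- it is the law of $\mathbf{x}(t)$ --- so the whole sequence converges in distribution to $\mathbf{x}(t)$.

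The main obstacle is Step 2: controlling the oscillations of the c\`{a}dl\`{a}g interpolation uniformly over a non-uniform, \emph{non-Gaussian} triangular array, where Gaussian tail bounds are unavailable and the required modulus-of-continuity estimates must be extracted from the martingale/drift decomposition and BDG-type inequalities. A secondary subtlety is the bookkeeping for the time-inhomogeneous, non-equilibrium regime --- the space--time lift disposes of it cleanly, but it is precisely this feature that prevents the equilibrium-state results of \cite{ethier2009markov,stroock2013introduction} from being applied verbatim.
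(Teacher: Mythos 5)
Your proposal is correct in outline, but it follows a genuinely different route from the paper. You pass to the space--time process and argue via generator convergence and the martingale problem: a Lindeberg-type $(2+\delta)$-moment estimate kills the non-Gaussian structure of $\mathbf{z}_k$ in the one-step generator, tightness in $D([0,1],\mathbb{R}^d)$ comes from a BDG/Gr\"onwall increment bound with the Aldous criterion, and the limit is identified by uniqueness of the well-posed martingale problem (Stroock--Varadhan / Ethier--Kurtz, Ch.~7). The paper instead works with Billingsley's Theorem 13.1 (f.d.d.\ convergence plus tightness implies weak convergence): tightness is proved through Kolmogorov's moment criterion with $p=4$ (hence its Assumption of finite fourth moments of $\mathbf{z}_k$), and f.d.d.\ convergence is obtained by a ``Gaussianization'' device --- each increment is subdivided into $S$ sub-increments so that the classical CLT plus Slutsky replaces $\mathbf{z}_k$ by Gaussian noise as $S\to\infty$, after which the Gaussianized walk is shown to converge to the SDE solution in $L^2$ by an Euler--Maruyama-type argument (It\^{o} isometry, Lipschitz bounds, Gr\"onwall). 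Your route buys weaker moment hypotheses ($2+\delta$ rather than fourth moments), works directly in the Skorokhod topology consistent with the c\`{a}dl\`{a}g statement, handles time-inhomogeneity cleanly through the space--time lift, and avoids the iterated $S,T$ limit interchange implicit in the paper's f.d.d.\ lemma; what it costs is reliance on the abstract well-posedness/uniqueness theory for martingale problems, whereas the paper's argument is more elementary and self-contained and exhibits the Gaussian limit explicitly via the CLT. To make your Step 2 fully rigorous you would still need the standard supplementary ingredients --- uniform moment bounds on the chain so that generator convergence on compacts suffices on the non-compact state space, and the argument that discrete martingales pass to continuous-time martingales along subsequences --- but these are exactly the components supplied by the Ethier--Kurtz diffusion-approximation theorems you cite, so the plan is sound.
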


Theorem \ref{thm:main} outlines the existence of a much larger class of increments $\boldsymbol\Delta_{\mathbf{x}_k}$ that converge to our desired limiting distribution $\mathbf{x}(t)$. The convergence to $\mathbf{x}(t)$ unlocks many of the essential properties for the tractability of diffusion models which we take for granted in Gaussian increments, such as fast sampling from the forward process and a closed form Eq. \eqref{eq:ll_bound}, without the need to assume Gaussian increments. Finally, we verify that we can recover Donsker's theorem when we let $\mathbf{f} = \mathbf{0}$ and $g = 1$.


\section{Non-Normal Diffusion Models}
\label{sec:gdpm}
Leveraging the framework established in Section \ref{sec:random_walk_limit}, we introduce an expanded class of probabilistic diffusion models, centered around alternative distributional assumptions for $q(\boldsymbol\Delta_{\mathbf{x}_k} | \mathbf{x}_{k+1})$ and $p_\theta(\boldsymbol\Delta_{\mathbf{x}_k} | \mathbf{x}_{k+1})$. While the space of viable diffusion models allowed by Theorem \ref{thm:main} effectively contains all distributions of $\boldsymbol\Delta_{\mathbf{x}_k}$ with finite mean and variance, we restrict our study to the following examples and leave further exploration to future work. Detailed derivations can be found in Appendix \ref{sec:ll_bound_derivations}. A summary of all models can be found in Table \ref{tab:1}.

\subsection{Gaussian $q$ and $p_\theta$}
\label{sec:gaussian_dm}
First, we recover the default diffusion model loss term $\mathcal{L}_k$ (from Eq. \ref{eq:ll_bound}) by making the standard assumption that $\boldsymbol\Delta_{\mathbf{x}_k}$ are normally distributed. Since the space of Gaussian-distributed random variables is closed under affine operations, we trivially obtain the convergence of the random walk (Eq. \ref{eq:markov_approximation}) to a Gaussian process. Using the closed form mean and variance terms of a linear ODE \cite{sarkka2019applied}, we obtain
\begin{equation}
\mathcal{L}_k = w_k \mathbb{E}_{\boldsymbol\epsilon} ||\boldsymbol\epsilon - \boldsymbol\epsilon_\theta(\mathbf{x}_k, t_k)||^2,
\label{eq:l2_ll_bound}
\end{equation}
where $w_k = \frac{g(t_k)^2}{2\sigma(t_k)^2} \Delta_{t_k}$ and $\boldsymbol\epsilon_\theta(\mathbf{x}_k, t_k) = \sigma(t_k) \mathbf{s}_\theta(\mathbf{x}_k, t_k)$. Plugging Eq. \ref{eq:l2_ll_bound} into the likelihood bound Eq. \ref{eq:ll_bound}, we see that maximizing the likelihood of a standard diffusion model with Gaussian increments minimizes a quadratic error term between the score function $\mathbf{s}_\theta(\mathbf{x}_k, t_k) = \frac{1}{\sigma(t_k)}\boldsymbol\epsilon_\theta(\mathbf{x}_k, t_k)$.

\subsection{Laplace $q$ and $p_\theta$}
\label{sec:laplace_dm}
We now consider the case of Laplace distributed $\boldsymbol\Delta_{\mathbf{x}_k}$. Invoking Theorem \ref{thm:main}, we can derive the alternative loss
\begin{equation}
\begin{split}
\mathcal{L}_k = \mathbb{E}_{\boldsymbol\epsilon} 
\big[&\exp(-v_k||\boldsymbol\epsilon - \boldsymbol\epsilon_\theta(\mathbf{x}_k, t_k)||_1) - 1 \\
&+ v_k||\boldsymbol\epsilon - \boldsymbol\epsilon_\theta(\mathbf{x}_k, t_k)||_1\big],
\label{eq:l1_ll_bound}
\end{split}
\end{equation}
where $v_k := \sqrt{w_k}$.

While the term in the expectation $\mathbf{d}(\boldsymbol\epsilon - \boldsymbol\epsilon_\theta(\mathbf{x}_k, t_k)) := \exp\left(-v_k||\boldsymbol\epsilon - \boldsymbol\epsilon_\theta(\mathbf{x}_k, t_k)||_1\right) - 1 + v_k||\boldsymbol\epsilon - \boldsymbol\epsilon_\theta(\mathbf{x}_k, t_k)||_1$ appears somewhat opaque, we can see that it converges to a weighted $L1$ norm of the error $\mathbf{r}_k := \boldsymbol\epsilon - \boldsymbol\epsilon_\theta(\mathbf{x}_k, t_k)$ under two conditions:
\begin{equation}
    \lim_{t_k \rightarrow 0} \frac{v_k ||\boldsymbol\epsilon - \boldsymbol\epsilon_\theta(\mathbf{x}_k, t_k)||_1}{\mathbf{d}(\boldsymbol\epsilon - \boldsymbol\epsilon_\theta(\mathbf{x}_k, t_k))} = 1,
\end{equation}
i.e., when $t$ is small, and
\begin{equation}
\lim_{||\mathbf{r}_k||_1 \rightarrow \infty} \frac{v_k ||\boldsymbol\epsilon - \boldsymbol\epsilon_\theta(\mathbf{x}_k, t_k)||_1}{\mathbf{d}(\boldsymbol\epsilon - \boldsymbol\epsilon_\theta(\mathbf{x}_k, t_k))} = 1
\end{equation}
i.e., when $||\mathbf{r}_k||$ is large.

\subsection{Uniform $q$, Gaussian $p_\theta$}
\label{sec:gaussian_unif_dm}
Next, we note that $q$ and $p_\theta$ need not be the same family of distributions to apply our framework. To illustrate this, we let $q$ be uniformly distributed on the interval $[\mu_1 - \sqrt{3}\sigma, \mu_2 + \sqrt{3}\sigma$, and $p$ be Gaussian distributed. This results in the familiar form
\begin{equation}
\mathcal{L}_k = w_k \mathbb{E}_{\boldsymbol\epsilon} ||\boldsymbol\epsilon - \boldsymbol\epsilon_\theta(\mathbf{x}_k, t_k)||^2 + C,
\end{equation}
where $C = \frac{1}{2}\left(1 + \log \frac{\pi}{6}\right) \approx 0.34$ may be seen as an additional distributional mismatch penalty incurred by the joint combination of the uniform and normal distributions. We note, however, that such a penalty does not always arise when $p_\theta$ and $q$ are not from the same family of distributions.

\begin{table}
\centering
\begin{tabular}{cccc}
    \toprule
    $q$  & $p_\theta$ & BPD & FID \\
    \midrule
    Gaussian & Gaussian & 2.49 & 1.98 \\
    Laplace & Laplace & 2.47 & 2.44 \\
    Uniform & Gaussian & 2.82 & 1.99 \\
    Uniform & Laplace & 2.66 & 2.39 \\
    \bottomrule
\end{tabular}
\caption{Comparison between the proposed diffusion models on the CIFAR10 dataset. We evaluate in terms of negative log-likelihood (BPD, lower is better) and sample quality (FID, lower is better). BPD and FID are computed with different architectures.}
\label{tab:2}
\end{table}

\subsection{Uniform $q$, Laplace $p_\theta$}
\label{sec:laplace_unif_dm}
Finally, we demonstrate that the phase transition in Section \ref{sec:laplace_dm} to an $L1$-based loss is made explicit in the case where $q$ is uniform and $p_\theta$ is the Laplace distribution. This configuration of distributions produces the piecewise loss
\begin{equation}
\mathcal{L}_k = 
\begin{cases}
w_k \mathbb{E}_{\boldsymbol\epsilon} ||\boldsymbol\epsilon - \boldsymbol\epsilon_\theta(\mathbf{x}_k, t_k)||^2_2 + \frac{1}{2} & \text{if} \hspace{.1in} \boldsymbol\epsilon_\theta(\mathbf{x}, t) \in A \\
v_k \mathbb{E}_{\boldsymbol\epsilon} ||\boldsymbol\epsilon - \boldsymbol\epsilon_\theta(\mathbf{x}_k, t_k)||_1 & \text{if} \hspace{.1in} \boldsymbol\epsilon_\theta(\mathbf{x}, t) \notin A \\
\end{cases},
\end{equation}
where $A = [\mu_1 - \sqrt{3}\sigma w_k, \mu_1 + \sqrt{3}\sigma w_k]$. Now, it is clear that $\mathcal{L}_k$ acts as a linear function in two cases. First, when $t_k \rightarrow 0$, as $A$ becomes a vanishingly small set. And second, when $\mathbf{r}_k := \boldsymbol\epsilon - \boldsymbol\epsilon_\theta(\mathbf{x}_k, t_k)$ is large. Both imply that $\boldsymbol\epsilon_\theta(\mathbf{x}, t) \notin A$.

\section{Experiments}
\label{sec:experiments}
For illustrative purposes, we evaluate the diffusion models proposed in Section \ref{sec:gdpm} on the CIFAR10 \cite{krizhevsky2009learning} and down-sampled ImageNet \cite{van2016pixel} datasets. We quantify the performance of our models with the negative log-likelihood in terms of bits per dimension (BPD) and the Frechet Inception Distance \cite{frechet}. Results are displayed in Table \ref{tab:2}. We show that our model obtains competitive results in terms of both metrics.

More interestingly, some of the losses proposed in Section \ref{sec:gdpm} result in generated samples with distinctly different visual characteristics. For example, images generated by the Laplace-based diffusion models exhibit markedly more saturated colors (Figure \ref{fig:comparison}.

\begin{figure}
    \centering
    \includegraphics[width=0.48\textwidth]{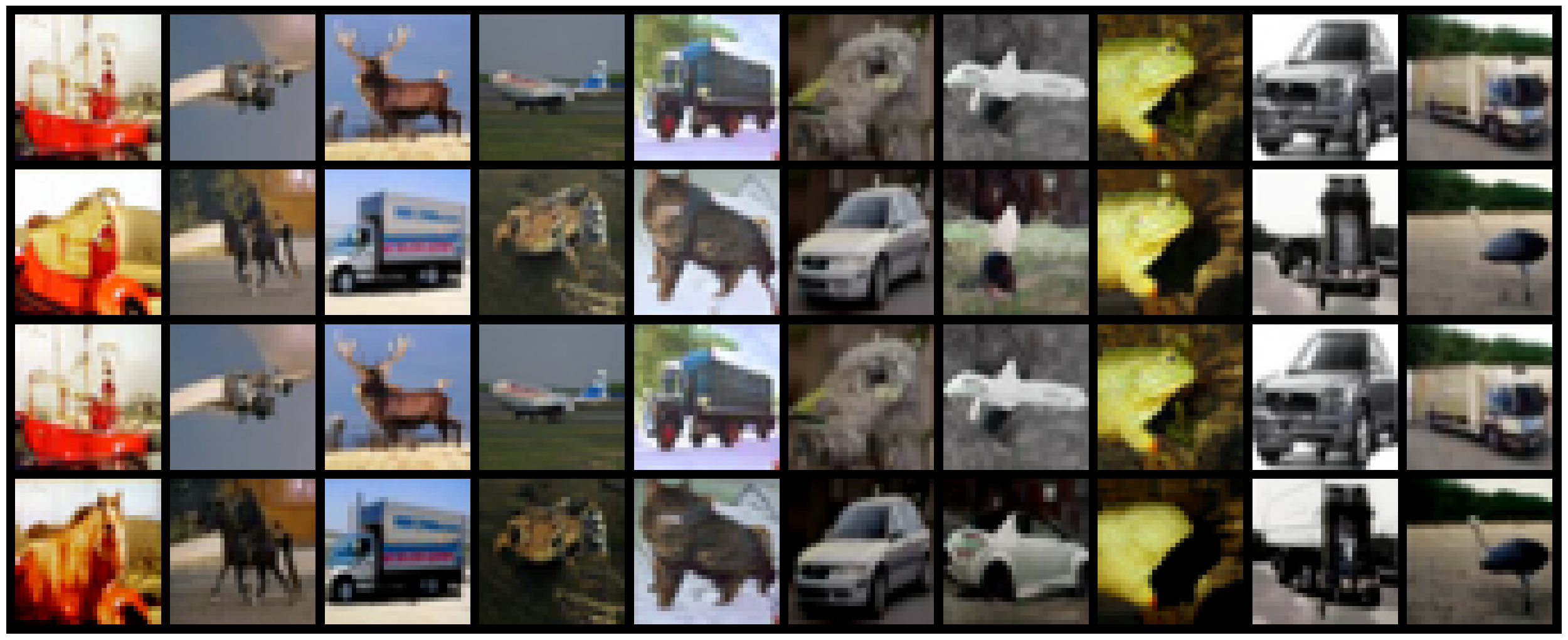}
    \caption{Images generated from the same seed via (in order from top to bottom) Gaussian-Gaussian, Laplace-Laplace, Uniform-Gaussian, and Uniform-Laplace diffusion increments. While the qualitative difference is somewhat subtle, Laplace diffusion appears to be biased towards smoother images with more saturated colors.}
    \label{fig:comparison}
\end{figure}

\section{Conclusion and Limitations}
We derived a probabilistic framework for designing more diverse diffusion models by showing an invariance to the distribution of the diffusion step $\boldsymbol\Delta_{\mathbf{x}_k} := \mathbf{x}_k - \mathbf{x}_{k + 1}$. Freeing up the distributional assumption on $\boldsymbol\Delta_{\mathbf{x}_k}$ allows the end-user greater control over the stylistic qualities of the generative model. An open question is whether score matching under an EMD norm enjoys the same statistical guarantees as the standard score matching objective, e.g., consistency, efficiency, and asymptotic normality \cite{hyvarinen2006consistency,song2020sliced}. We hope that our theoretical framework opens the door for the further diversity and improvements in the design of diffusion models.

\bibliography{main}
\bibliographystyle{icml2023}

\newpage
\appendix
\onecolumn

\section{Derivations}

\subsection{KL Divergence Between Laplace Distributions}
\label{sec:laplace_kl}
For completeness, we provide a derivation for the KL divergence between two Laplace distributions. Let $p$ and $q$ be density functions of distributions $\text{Laplace}(\mu_1, b_1)$ and $\text{Laplace}(\mu_2, b_2)$, i.e.,
\begin{align}
    p(x) &= \frac{1}{2b_1} \exp\left(-\frac{|x - \mu_1|}{b_1}\right) \\
    q(x) &= \frac{1}{2b_2} \exp\left(-\frac{|x - \mu_2|}{b_2}\right).
\end{align}

Then the KL divergence between the two distributions can be written as
\begin{equation}
    KL(p(x)||q(x)) = \underbrace{\int_{-\infty}^\infty p(x) \log p(x) dx}_{*} - \underbrace{\int_{-\infty}^\infty p(x) \log q(x) dx}_{**}
    \label{eq:kl_decomposition}
\end{equation}
We will first approach $**$ as its solution will give us $*$. Plugging in $p$ and $q$, we have
\begin{align*}
    -\int_{-\infty}^\infty p(x) \log q(x) dx 
    &= \int_{-\infty}^\infty \frac{|x - \mu_2|}{2b_1 b_2} \exp\left(-\frac{|x - \mu_1|}{b_1}\right) dx + \log(2b_2),
\end{align*}
where in the case that $\mu_1 > \mu_2$, the integral can be written as
\begin{align*}
    \int_{-\infty}^\infty \frac{|x - \mu_2|}{2b_1 b_2} \exp&\left(-\frac{|x - \mu_1|}{b_1}\right) dx \\
    &= \int_{-\infty}^{\mu_2} \frac{\mu_2 - x}{2b_1 b_2} \exp\left(-\frac{\mu_1 - x}{b_1}\right) dx + \int_{\mu_2}^{\mu_1} \frac{x - \mu_2}{2b_1 b_2} \exp\left(-\frac{\mu_1 - x}{b_1}\right) dx \\
    &\hspace{.3in}+\int_{\mu_1}^\infty \frac{x - \mu_2}{2b_1 b_2} \exp\left(-\frac{x - \mu_1}{b_1}\right) dx \\
    &= \left[\frac{b_1}{2b_2} \exp\left(-\frac{\mu_1 - \mu_2}{b_1}\right)\right] + \left[\frac{\mu_1 - \mu_2 - b_1}{2b_2} + \frac{b_1}{2b_2}\exp\left(-\frac{\mu_1 - \mu_2}{b_1}\right) \right] \\
    &\hspace{.3in}+ \left[ \frac{\mu_1 - \mu_2 + b_1}{2b_2} \right] \\
    &= \frac{\mu_1 - \mu_2}{b_2} + \frac{b_1}{b_2}\exp\left(-\frac{\mu_1 - \mu_2}{b_1}\right),
\end{align*}
and similarly for the case $\mu_1 \leq \mu_2$,
\begin{align*}
    \int_{-\infty}^\infty \frac{|x - \mu_2|}{2b_1 b_2} \exp&\left(-\frac{|x - \mu_1|}{b_1}\right) dx \\
    &= \int_{-\infty}^{\mu_1} \frac{\mu_2 - x}{2b_1 b_2} \exp\left(-\frac{\mu_1 - x}{b_1}\right) dx + \int_{\mu_1}^{\mu_2} \frac{\mu_2 - x}{2b_1 b_2} \exp\left(-\frac{x - \mu_1}{b_1}\right) dx \\
    &\hspace{.3in}+\int_{\mu_2}^\infty \frac{x - \mu_2}{2b_1 b_2} \exp\left(-\frac{x - \mu_1}{b_1}\right) dx \\
    &= \left[ \frac{\mu_2 - \mu_1 + b_1}{2b_2} \right] + \left[\frac{b_1}{2b_2}\exp\left(-\frac{\mu_2 - \mu_1}{b_1}\right) + \frac{\mu_2 - \mu_1 - b_1}{2b_2} \right] \\
    &\hspace{.3in}+ \left[\frac{b_1}{2b_2} \exp\left(-\frac{\mu_2 - \mu_1}{b_1}\right)\right] \\
    &= \frac{\mu_2 - \mu_1}{b_2} + \frac{b_1}{b_2}\exp\left(-\frac{\mu_2 - \mu_1}{b_1}\right).
\end{align*}
Combining both cases and returning to the original cross entropy term, we have
\begin{equation}
    -\int_{-\infty}^\infty p(x) \log q(x) dx 
    = \frac{|\mu_2 - \mu_1|}{b_2} + \frac{b_1}{b_2}\exp\left(-\frac{|\mu_2 - \mu_1|}{b_1}\right) + \log(2b_2).
\end{equation}
Now, letting $p = q$ we can compute the entropy term as
\begin{equation}
    \int_{-\infty}^\infty p(x) \log p(x) dx 
    = - 1 - \log(2b_1).
\end{equation}
Thus, we can conclude that
\begin{equation}
    KL(p(x)||q(x)) = \frac{b_1}{b_2}\exp\left(-\frac{|\mu_2 - \mu_1|}{b_1}\right) + \frac{|\mu_2 - \mu_1|}{b_2} + \log\frac{b_2}{b_1} - 1.
\end{equation}

\subsection{KL Divergence Between a Gaussian Distribution and a Bounded Uniform Distribution}
\label{sec:gaussian_unif_kl}
Let $p$ and $q$ denote the density functions of the $\text{Uniform}([\mu_1 -b_1, \mu_1 + b_1])$ and $\mathcal{N}(\mu_2, \sigma_2)$ distributions, respectively. Then
\begin{align}
    p(x) &= \mathbbm{1}_{x \in [\mu_1 - b_1, \mu_1 + b_1]}\frac{1}{2b_1} \\
    q(x) &= \frac{1}{\sigma \sqrt{2 \pi} } \exp\left(-\left(\frac{x - \mu_2}{b_2}\right)^2\right).
\end{align}
Again writing the KL decomposition between $p$ and $q$ as Eq. \ref{eq:kl_decomposition}, we note that the entropy term $*$ is identical to that of Section \ref{sec:subgaussian_kl}:
\begin{align*}
    \int_{-\infty}^\infty p(x) \log p(x) dx &= -\log(2b_1).
\end{align*}

Turning to the cross-entropy term $**$:
\begin{align*}
    -\int_{-\infty}^\infty p(x) \log q(x) dx 
    &= \int_{\mu_1 - b_1}^{\mu_1 + b_1} \frac{1}{2b_1} \left(\log(\sigma\sqrt{2\pi}) + \frac{1}{2\sigma^2} (x - \mu_2)^2\right) dx \\
    &= \log(\sigma\sqrt{2\pi}) + \int_{\mu_1 - b_1}^{\mu_1 + b_1} \frac{1}{4b_1\sigma^2} (x - \mu_2)^2 dx \\
    &= \log(\sigma\sqrt{2\pi}) + \frac{1}{4b_1\sigma^2} \left(\frac{1}{3} x^3 - \mu_2 x^2 + \mu_2^2 x \right) \Bigg\vert_{\mu_1 - b_1}^{\mu_1 + b_1} \\
    &= \log(\sigma\sqrt{2\pi}) + \frac{1}{4b_1\sigma^2} \left[2 b_1 \left( (\mu_1 - \mu_2)^2 + \frac{1}{3}b_1^2 \right) \right]
\end{align*}

Combining terms, we obtain the KL divergence
\begin{equation}
    KL(p(x)||q(x)) = \frac{1}{2}\left(\frac{1}{\sigma^2}  (\mu_1 - \mu_2)^2 + \log \frac{\pi}{6} + 1 \right),
\end{equation}
where we note that $b_1 = \sqrt{3} \sigma$.

\subsection{KL Divergence Between a Laplace Distribution and a Bounded Uniform Distribution}
\label{sec:laplace_unif_kl}
Let $p$ and $q$ denote the density functions of the $\text{Uniform}([\mu_1 -b_1, \mu_1 + b_1])$ and $\text{Laplace}(\mu_2, b_2)$ distributions, respectively. Then
\begin{align}
    p(x) &= \mathbbm{1}_{x \in [\mu_1 - b_1, \mu_1 + b_1]}\frac{1}{2b_1} \\
    q(x) &= \frac{1}{2b_2} \exp\left(-\frac{|x - \mu_2|}{b_2}\right).
\end{align}
We once again write the KL decomposition between $p$ and $q$ as Eq. \ref{eq:kl_decomposition}, and begin with the entropy term $*$:
\begin{align*}
    \int_{-\infty}^\infty p(x) \log p(x) dx &= -\int_{\mu_1 - b_1}^{\mu_1 + b_1} \frac{\log(2b_1)}{2b_1} dx \\
    &= \frac{\log(2b_1)}{2b_1}(\mu_1 - b_1) - \frac{\log(2b_1)}{2b_1}(\mu_1 + b_1) \\
    &= -\log(2b_1).
\end{align*}

Turning to the cross-entropy term $**$:
\begin{align*}
    -\int_{-\infty}^\infty p(x) \log q(x) dx 
    &= \int_{\mu_1 - b_1}^{\mu_1 + b_1} \frac{1}{2b_1} \left(\log(2b_2) + \frac{|x - \mu_2|}{b_2}\right) dx \\
    &= \log(2b_2) + \int_{\mu_1 - b_1}^{\mu_1 + b_1} \frac{1}{2b_1} \frac{|x - \mu_2|}{b_2} dx.
\end{align*}

Considering the case where $\mu_2 < \mu_1 - b_1$, the above integral reduces to
\begin{align*}
    \int_{\mu_1 - b_1}^{\mu_1 + b_1} \frac{1}{2b_1b_2} |x - \mu_2| dx &= \int_{\mu_1 - b_1}^{\mu_1 + b_1} \frac{1}{2b_1b_2} x - \mu_2 dx \\
    &= \frac{1}{2b_1b_2} \left(\frac{1}{2}x^2 - \mu_2x
    \right) \Bigg\vert_{\mu_1 - b_1}^{\mu_1 + b_1} \\
    &= \frac{1}{2b_1b_2} \left( 2b_1(\mu_1 - \mu_2) \right),
\end{align*}
whereas the case $\mu_2 < \mu_1 - b_1$ gives
\begin{align*}
    \int_{\mu_1 - b_1}^{\mu_1 + b_1} \frac{1}{2b_1b_2} |x - \mu_2| dx &= \int_{\mu_1 - b_1}^{\mu_1 + b_1} \frac{1}{2b_1b_2} x - \mu_2 dx \\
    &= \frac{1}{2b_1b_2} \left(\mu_2x - \frac{1}{2}x^2 
    \right) \Bigg\vert_{\mu_1 - b_1}^{\mu_1 + b_1} \\
    &= \frac{1}{2b_1b_2} \left( 2b_1(\mu_2 - \mu_1) \right).
\end{align*}

Finally, when $\mu_2 \in [\mu_1 - b_1, \mu_1 + b_1]$, we have
\begin{align*}
    \int_{\mu_1 - b_1}^{\mu_1 + b_1} \frac{1}{2b_1b_2} |x - \mu_2| dx &= \frac{1}{2b_1b_2} \left[ \int_{\mu_1 - b_1}^{\mu_2} (\mu_2 - x) dx + \int_{\mu_2}^{\mu_1 + b_1} \frac{1}{2b_1b_2} (x - \mu_2) dx \right] \\
    &= \frac{1}{2b_1b_2} \left[  \left(\mu_2x - \frac{1}{2}x^2 
    \right) \Bigg\vert_{\mu_1 - b_1}^{\mu_2} + \left(\frac{1}{2}x^2 - \mu_2x\right) \Bigg\vert_{\mu_2}^{\mu_1 + b_1} \right] \\
    &= \frac{1}{2b_1b_2} \left[ \left(\frac{1}{2}(\mu_1 - \mu_2)^2 + \frac{1}{2} b_1^2 + b_1(\mu_2 - \mu_1) \right) + \left(\frac{1}{2}(\mu_1 - \mu_2)^2 + \frac{1}{2} b_1^2 + b_1(\mu_1 - \mu_2) \right) \right] \\
    &= \frac{1}{2b_1b_2} \left[ (\mu_1 - \mu_2)^2 + b_1^2 \right].
\end{align*}

Combining the cases, we obtain the KL divergence
\begin{equation}
    KL(p(x)||q(x)) = 
    \begin{cases}
    \log\frac{b_2}{b_1} + \frac{1}{2b_1b_2} ((\mu_1 - \mu_2)^2 + b_1^2 ) & \hspace{.1in}\mu_2 \in [\mu_1 - b_1, \mu_1 + b_1] \\
    \log\frac{b_2}{b_1} + \frac{1}{b_2}|\mu_1 - \mu_2| & \hspace{.1in}\mu_2 \notin [\mu_1 - b_1, \mu_1 + b_1]
    \end{cases}.
\end{equation}




\subsection{KL Divergence Between Linear Sub-Gaussian Distributions}
\label{sec:subgaussian_kl}
Let $\mathbf{LSG}$ denote a Linear Sub-Gaussian Distribution, and $p$ and $q$ denote the density functions of $\mathbf{LSG}(\mu_1, s_1)$ and $\mathbf{LSG}(\mu_2, s_2)$, respectively. For simplicity we assume that $s_1 = s_2$, as this is the case we consider in our diffusion models. Then
\begin{align}
    p(\mathbf{x}) &= \max\left(0, \frac{1}{s^2}(s - |x - \mu_1|)\right) \\
    p(\mathbf{x}) &= \max\left(0, \frac{1}{s^2}(s - |x - \mu_2|)\right).
\end{align}
We once again write the KL decomposition between $p$ and $q$ as Eq. \ref{eq:kl_decomposition}, and start with the cross-entropy term $**$:
\begin{align*}
    -\int_{-\infty}^\infty p(x) \log q(x) dx 
    &= -\int_{\mu_1 - s}^{\mu_1 + s} \frac{1}{s^2}(s - |x - \mu_1|) \log \left(\frac{1}{s^2}(s - |x - \mu_2|)\right) dx.
\end{align*}

Considering the case where $\mu_1 < \mu_2$, we have
\begin{align*}
    \int_{\mu_1 - s}^{\mu_1 + s} \frac{1}{s^2}&(s - |x - \mu_1|) \log \left(\frac{1}{s^2}(s - |x - \mu_2|)\right) dx \\
    &= \int_{\mu_1 - s}^{\mu_1 + s} \frac{1}{s^2}(s - |x - \mu_1|) \log \left(\frac{1}{s^2}(s - |x - \mu_2|)\right) dx 
\end{align*}

\subsection{Deriving $\mathcal{L}_k$}
\label{sec:ll_bound_derivations}
We use the following lemmas to obtain Eqs. (\ref{eq:l2_ll_bound}) and (\ref{eq:l1_ll_bound}) in Sections \ref{sec:gaussian_dm} and \ref{sec:laplace_dm}. Throughout this section, we will use
\begin{align}
    \mathbf{f}_\theta &= \mathbf{f}(\mathbf{x}, t) - \frac{1}{2} g(t)^2 \nabla_\mathbf{x} \log p(\mathbf{x}, t), \\
    \hat{\mathbf{f}}_\theta &= \mathbf{f}(\mathbf{x}, t) - \frac{1}{2} g(t)^2 \mathbf{s}_\theta(\mathbf{x}, t),
\end{align}
where $\mathbf{f}$ and $g$ are defined as in Eq. (\ref{eq:ito_sde}), to denote the true and learned reverse drift terms described in Eq (\ref{eq:reverse_sde}).

\begin{lem}
\label{thm:normal_limit}
Let $\boldsymbol\Delta_{\mathbf{x}_k}$ be normally distributed, i.e., 
\begin{align}
    p_\theta(\boldsymbol\Delta_{\mathbf{x}_k} | \mathbf{x}_k) &= \mathcal{N}\left(\boldsymbol\Delta_{\mathbf{x}_k}; \hat{\mathbf{f}}_\theta\left(\mathbf{x}_k, t_k\right) \Delta_{t_k}, g\left(t_k\right)^2 \Delta_{t_k}\right), \\
    q(\boldsymbol\Delta_{\mathbf{x}_k} | \mathbf{x}_k) &= \mathcal{N}\left(\boldsymbol\Delta_{\mathbf{x}_k}; \mathbf{f}_\theta\left(\mathbf{x}_k, t_k\right) \Delta_{t_k}, g\left(t_k\right)^2 \Delta_{t_k}\right).
\end{align}
Then
\begin{equation}
\mathcal{L}_k = w_k \mathbb{E}_{\boldsymbol\epsilon \sim q} ||\boldsymbol\epsilon - \boldsymbol\epsilon_\theta(\mathbf{x}_k, t_k)||^2,
\end{equation}
where $w_k := \frac{g(t_k)^2}{2\sigma(t_k)^2} \Delta_{t_k}$.
\end{lem}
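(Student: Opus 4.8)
The plan is to reduce the KL term to the Mahalanobis distance between the two Gaussian means, rewrite that distance in terms of the score mismatch $\nabla_\mathbf{x}\log p - \mathbf{s}_\theta$, and then convert the resulting weighted score-matching loss into the noise-prediction form using the change of variables supplied by Lemma~\ref{thm:random_walk_mu_sigma} together with a denoising score matching argument.

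Since $p_\theta(\boldsymbol\Delta_{\mathbf{x}_k}\mid\mathbf{x}_k)$ and $q(\boldsymbol\Delta_{\mathbf{x}_k}\mid\mathbf{x}_k)$ are Gaussians sharing the covariance $g(t_k)^2\Delta_{t_k}\mathbf{I}$, the first row of Table~\ref{tab:1} (equivalently, a one-line computation of the KL between two equal-covariance Gaussians) gives
\[
\mathcal{L}_k=\frac{1}{2g(t_k)^2\Delta_{t_k}}\big\|\hat{\mathbf{f}}_\theta(\mathbf{x}_k,t_k)\Delta_{t_k}-\mathbf{f}_\theta(\mathbf{x}_k,t_k)\Delta_{t_k}\big\|^2 .
\]
The $\mathbf{f}(\mathbf{x}_k,t_k)$ contributions to $\mathbf{f}_\theta$ and $\hat{\mathbf{f}}_\theta$ cancel, so the mean difference equals $\tfrac12 g(t_k)^2\Delta_{t_k}\big(\mathbf{s}_\theta(\mathbf{x}_k,t_k)-\nabla_\mathbf{x}\log p(\mathbf{x}_k,t_k)\big)$; substituting and collecting the $t_k$-dependent scalars leaves $\mathcal{L}_k$ proportional to $g(t_k)^2\Delta_{t_k}\,\mathbb{E}_q\big\|\mathbf{s}_\theta(\mathbf{x}_k,t_k)-\nabla_\mathbf{x}\log p(\mathbf{x}_k,t_k)\big\|^2$, a weighted score-matching objective.

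Next I would pass to the noise parameterization. By Corollary~\ref{thm:normal_diffusions} and Lemma~\ref{thm:random_walk_mu_sigma}, with Gaussian $\mathbf{z}_k$ and linear drift the marginal is $\mathbf{x}_k\mid\mathbf{x}_0\sim\mathcal{N}(\bar\alpha_k\mathbf{x}_0,\sigma(t_k)^2\mathbf{I})$, so $\mathbf{x}_k=\bar\alpha_k\mathbf{x}_0+\sigma(t_k)\boldsymbol\epsilon$ for $\boldsymbol\epsilon\sim\mathcal{N}(\mathbf{0},\mathbf{I})$ and $\nabla_\mathbf{x}\log p(\mathbf{x}_k\mid\mathbf{x}_0)=-\boldsymbol\epsilon/\sigma(t_k)$. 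A denoising score matching argument then lets me replace the intractable marginal score $\nabla_\mathbf{x}\log p(\mathbf{x}_k,t_k)$ by this conditional score inside the expectation at the cost of an additive term independent of $\theta$, and the identification $\boldsymbol\epsilon_\theta(\mathbf{x}_k,t_k)=\sigma(t_k)\mathbf{s}_\theta(\mathbf{x}_k,t_k)$ turns the expectation into $\sigma(t_k)^{-2}\,\mathbb{E}_{\boldsymbol\epsilon\sim q}\|\boldsymbol\epsilon-\boldsymbol\epsilon_\theta(\mathbf{x}_k,t_k)\|^2$. Absorbing the $\sigma(t_k)^{-2}$ into the prefactor from the previous step produces the weight $w_k=\tfrac{g(t_k)^2}{2\sigma(t_k)^2}\Delta_{t_k}$ and the claimed identity.

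The step I expect to be the main obstacle is the denoising score matching reduction: the replacement of $\nabla_\mathbf{x}\log p(\mathbf{x}_k,t_k)$ by $-\boldsymbol\epsilon/\sigma(t_k)$ is only an equality up to a constant that does not depend on $\theta$, so the stated identity for $\mathcal{L}_k$ must be read modulo such constants, which is exactly what is needed since $\mathcal{L}_k$ enters the bound in Eq.~\eqref{eq:ll_bound} as a training objective. A secondary, bookkeeping obstacle is tracking the noise-schedule constants --- one uses $\sigma(t_k)^2=\bar\gamma_k$ from Lemma~\ref{thm:random_walk_mu_sigma} to define $\boldsymbol\epsilon_\theta$, and must check that the $\tfrac12 g(t_k)^2$ from the reverse drift, the $g(t_k)^2\Delta_{t_k}$ from the step covariance, and the $\sigma(t_k)^{-2}$ from the change of variables all combine into the single weight $w_k$.
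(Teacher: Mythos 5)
Your proposal is correct and follows essentially the same route as the paper's proof: the equal-covariance Gaussian KL reduces to the scaled squared mean difference, the forward drift cancels to leave a weighted score-matching term in $\mathbf{s}_\theta-\nabla_\mathbf{x}\log p$, and the reparameterization $\boldsymbol\epsilon_\theta(\mathbf{x}_k,t_k)=\sigma(t_k)\mathbf{s}_\theta(\mathbf{x}_k,t_k)$ (justified, as you correctly flag, by a denoising-score-matching identity valid up to a $\theta$-independent additive constant --- a point the paper leaves implicit) gives $w_k\,\mathbb{E}\|\boldsymbol\epsilon-\boldsymbol\epsilon_\theta(\mathbf{x}_k,t_k)\|^2$. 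One bookkeeping caveat: you carry the probability-flow coefficient $\tfrac12 g(t_k)^2$ from the lemma's stated definitions of $\mathbf{f}_\theta,\hat{\mathbf{f}}_\theta$, which would leave an extra factor of $\tfrac14$ relative to the claimed $w_k=\tfrac{g(t_k)^2}{2\sigma(t_k)^2}\Delta_{t_k}$; the stated weight only emerges if the means differ by the reverse-SDE term $g(t_k)^2(\mathbf{s}_\theta-\nabla_\mathbf{x}\log p)\Delta_{t_k}$, which is what the paper's own proof silently substitutes, so this constant-factor slippage traces to an inconsistency in the paper's setup rather than to a flaw in your argument.
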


\begin{proof}
Plugging in the closed form solution to the KL divergence between two Gaussian distributions into the likelihood lower bound,
\begin{align*}
   \mathcal{L}_k &= KL(p_\theta(\boldsymbol\Delta_{\mathbf{x}_k}|\mathbf{x}_k) || q(\boldsymbol\Delta_{\mathbf{x}_k}|\mathbf{x}_k)) \\
    &= \mathbb{E}\left[\frac{||\boldsymbol\mu_{p_\theta,k}(\mathbf{x}_k) - \boldsymbol\mu_{q,k}(\mathbf{x}_k)||^2}{2\sigma^2}\right]. \\
    \intertext{Since $\boldsymbol\mu_{q,k} = (\mathbf{f}(\mathbf{x}_k, t_k) - g(t_k)^2 \nabla \log p(\mathbf{x}_k)) \Delta_{t_k}$, $\boldsymbol\mu_{p,k} = (\mathbf{f}(\mathbf{x}_k, t_k) - g(t_k)^2 \nabla \log p_\theta(\mathbf{x}_k))\Delta_{t_k}$, and $\sigma_{p_\theta,k} = \sigma_{q,k} = g(t_k)\sqrt{\Delta_{t_k}}$, we have}
    \mathcal{L}_k &= \frac{1}{2} \mathbb{E}\left[\frac{||g(t_k)^2 \nabla \log p(\mathbf{x}_k) - g(t_k)^2 \nabla \log p_\theta(\mathbf{x}_k)||^2 \Delta_{t_k}^2}{g(t_k)^2 \Delta_{t_k}}\right] \\
    &= \frac{1}{2} \mathbb{E}\left[g(t_k)^2 ||\nabla \log p(\mathbf{x}_k) - \nabla \log p_\theta(\mathbf{x}_k)||^2 \Delta_{t_k}\right]. \\
    \intertext{Finally, following the parameterization of the score model ($i.e., \boldsymbol\epsilon_\theta(\mathbf{x}, t) = \sigma(t_k) \nabla p_\theta(\mathbf{x}, t)$) in \cite{ho2020denoising}, we may write}
    \mathcal{L}_k &= w_k \mathbb{E}\left[ ||\boldsymbol\epsilon - \boldsymbol\epsilon_\theta(\mathbf{x}_k, t_k)||^2\right], \numberthis \label{eq:l2_derivation}
\end{align*}
where $w_k := \frac{1}{2} g(t_k)^2 \sigma(t_k)^2 \Delta_{t_k}$, and $\sigma(t_k)^2$ is as defined in Lemma \ref{thm:random_walk_mu_sigma}.
\end{proof}

\begin{lem}
\label{thm:laplace_limit}
Let $\boldsymbol\Delta_{\mathbf{x}_k}$ be Laplace distributed, i.e., 
\begin{align}
    p_\theta(\boldsymbol\Delta_{\mathbf{x}_k} | \mathbf{x}_k) &= \text{Laplace}\left(\boldsymbol\Delta_{\mathbf{x}_k}; \hat{\mathbf{f}}_\theta\left(\mathbf{x}_k, t_k\right) \Delta_{t_k}, g\left(t_k\right)^2 \Delta_{t_k}\right), \\
    q(\boldsymbol\Delta_{\mathbf{x}_k} | \mathbf{x}_k) &= \text{Laplace}\left(\boldsymbol\Delta_{\mathbf{x}_k}; \mathbf{f}_\theta\left(\mathbf{x}_k, t_k\right) \Delta_{t_k}, g\left(t_k\right)^2 \Delta_{t_k}\right).
\end{align}
Then, letting $\mathbf{r}_k := \boldsymbol\epsilon - \boldsymbol\epsilon_\theta(\mathbf{x}_k, t_k)$,
\begin{equation}
\mathcal{L}_k = \exp\left(-w_k||\mathbf{r}_k||_1\right) - 1 + w_k||\mathbf{r}_k||_1,
\end{equation}
where $w_k := \frac{g(t_k)}{\sigma(t_k)}\sqrt{\Delta_{t_k}}$..
\end{lem}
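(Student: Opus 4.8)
The plan is to mirror the proof of Lemma~\ref{thm:normal_limit}, substituting the Laplace KL formula from Appendix~\ref{sec:laplace_kl} for the Gaussian one. First I would write $\mathcal{L}_k = KL\big(p_\theta(\boldsymbol\Delta_{\mathbf{x}_k}\mid\mathbf{x}_k)\,\|\,q(\boldsymbol\Delta_{\mathbf{x}_k}\mid\mathbf{x}_k)\big)$ and note that, by hypothesis, both conditionals are $\text{Laplace}$ with a common scale determined by $g(t_k)^2\Delta_{t_k}$ and differ only in their location parameters $\boldsymbol\mu_{p_\theta,k} = \hat{\mathbf{f}}_\theta(\mathbf{x}_k,t_k)\Delta_{t_k}$ and $\boldsymbol\mu_{q,k} = \mathbf{f}_\theta(\mathbf{x}_k,t_k)\Delta_{t_k}$. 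The shared term $\mathbf{f}(\mathbf{x}_k,t_k)\Delta_{t_k}$ cancels, leaving the location gap $\boldsymbol\mu_{p_\theta,k}-\boldsymbol\mu_{q,k} = \tfrac12 g(t_k)^2\Delta_{t_k}\big(\nabla_\mathbf{x}\log p(\mathbf{x}_k,t_k)-\mathbf{s}_\theta(\mathbf{x}_k,t_k)\big)$, exactly as in the Gaussian case.

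Next I would invoke the closed form derived in Appendix~\ref{sec:laplace_kl} and specialize it to the equal-scale case $b_1=b_2=b$ relevant here; this kills the $\log(b_2/b_1)$ term and collapses the prefactor, giving $KL = \exp(-|\mu_1-\mu_2|/b) - 1 + |\mu_1-\mu_2|/b$, a quantity that depends only on $|\mu_1-\mu_2|$. Working coordinatewise over the $d$ dimensions of the increment, the location gap enters through $\|\boldsymbol\mu_{p_\theta,k}-\boldsymbol\mu_{q,k}\|_1$, and fixing the scale so that $\Var(\mathbf{z}_k)=1$ as required by Definition~\ref{defn:srw} identifies $b$ with $g(t_k)\sqrt{\Delta_{t_k}}$ up to the usual constant.

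Then I would pass to the $\boldsymbol\epsilon$-parameterization of \cite{ho2020denoising}: with $\boldsymbol\epsilon_\theta(\mathbf{x}_k,t_k)=\sigma(t_k)\mathbf{s}_\theta(\mathbf{x}_k,t_k)$ and the true noise $\boldsymbol\epsilon$ corresponding to $\sigma(t_k)\nabla_\mathbf{x}\log p(\mathbf{x}_k,t_k)$, one has $\|\nabla_\mathbf{x}\log p(\mathbf{x}_k,t_k)-\mathbf{s}_\theta(\mathbf{x}_k,t_k)\|_1 = \sigma(t_k)^{-1}\|\mathbf{r}_k\|_1$. Collecting constants gives $\|\boldsymbol\mu_{p_\theta,k}-\boldsymbol\mu_{q,k}\|_1/b = w_k\|\mathbf{r}_k\|_1$ with $w_k = \frac{g(t_k)}{\sigma(t_k)}\sqrt{\Delta_{t_k}}$ (absorbing the factors of two exactly as in Lemma~\ref{thm:normal_limit}), and substituting into $\exp(-w_k\|\mathbf{r}_k\|_1)-1+w_k\|\mathbf{r}_k\|_1$, then reinstating the expectation over $\boldsymbol\epsilon\sim q$ from Eq.~\eqref{eq:ll_bound}, yields the claimed $\mathcal{L}_k$.

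The main obstacle is the multivariate bookkeeping: summing the per-coordinate KL terms produces $\sum_j\big(\exp(-w_k|r_{k,j}|)-1+w_k|r_{k,j}|\big)$, whereas the statement writes a single $\exp(-w_k\|\mathbf{r}_k\|_1)$; reconciling these requires either adopting a genuinely multivariate Laplace law whose log-density depends only on $\|\cdot\|_1$ (and making explicit that the $\ell_1$-in-the-exponent form is what is meant), or restricting to the scalar case and lifting via the $\ell_1$ norm. A secondary nuisance is matching the scale convention implied by the notation $\text{Laplace}(\cdot,g(t_k)^2\Delta_{t_k})$ with the actual scale $b$, together with the $g^2$ versus $\tfrac12 g^2$ ambiguity in the reverse drift; since these are the same loose conventions already used in Lemma~\ref{thm:normal_limit}, I would simply follow them rather than try to pin down every constant.
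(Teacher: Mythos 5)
Your proposal follows essentially the same route as the paper's proof: plug the equal-scale Laplace--Laplace KL closed form from Appendix~\ref{sec:laplace_kl} into $\mathcal{L}_k$, cancel the shared drift so only the score-difference term remains in the location gap, pass to the $\boldsymbol\epsilon$-parameterization, and collect constants into $w_k = \frac{g(t_k)}{\sigma(t_k)}\sqrt{\Delta_{t_k}}$ (the paper's $v_k$). The two caveats you flag --- the per-coordinate sum versus a single $\exp(-w_k\|\mathbf{r}_k\|_1)$ term, and the scale/drift-constant conventions --- are genuine loosenesses that the paper's own proof silently glosses over by writing the $\ell_1$ norm directly into the univariate KL formula, so your treatment is, if anything, more careful on those points.
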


\begin{proof}
Plugging in the closed form solution to the KL divergence between two Laplace distributions into the likelihood lower bound (Appendix \ref{sec:laplace_kl}),
\begin{align*}
    \mathcal{L}_k &= KL(p(\boldsymbol\Delta_{\mathbf{x}_k}|\mathbf{x}_k) || q(\boldsymbol\Delta_{\mathbf{x}_k}|\mathbf{x}_k)) \\
    &= \exp\underbrace{\left(\frac{-||\mu_{p_\theta,k} - \mu_{q,k}||_1}{\sigma_p}\right)}_{\mathbf{d}_k} - 1 + \underbrace{\frac{||\mu_{p_\theta,k} - \mu_{q, k}||_1}{\sigma_p}}_{\mathbf{d}_k} \\
\end{align*}
Observe that $\mathbf{d}_k$ can be simplified as
\begin{align*}
    \mathbf{d}_k &= \frac{||\mu_{p_\theta,k} - \mu_{q, k}||_1}{\sigma_p} \\
    &= \frac{g(t_k)^2|| \nabla_\mathbf{x} \log p(\mathbf{x}_k) - \nabla_\mathbf{x} \log p_\theta(\mathbf{x}_k)||_1}{g(t_k)\sqrt{\Delta_{t_k}}} \\
    &= \frac{\sqrt{\Delta_{t_k}} g(t_k)|| \boldsymbol\epsilon - \boldsymbol\epsilon_\theta(\mathbf{x}_k, t_k)||_1}{\sigma(t_k)} \\
    &= v_k || \boldsymbol\epsilon - \boldsymbol\epsilon_\theta(\mathbf{x}_k, t_k)||_1, \numberthis \label{eq:l1_derivation}
\end{align*}
where $v_k := \frac{g(t_k)}{\sigma(t_k)}\sqrt{\Delta_{t_k}}$. Therefore,
\begin{align*}
    \mathcal{L}_k = \left(-v_k||\boldsymbol\epsilon - \boldsymbol\epsilon_\theta(\mathbf{x}_k, t_k)||_1\right) - 1 + v_k||\boldsymbol\epsilon - \boldsymbol\epsilon_\theta(\mathbf{x}_k, t_k)||_1.
\end{align*}
\end{proof}

\begin{lem}
\label{thm:gauss_unif_limit}
Let $p_\theta(\boldsymbol\Delta_{\mathbf{x}_k}|\mathbf{x}_{k+1})$ be normally distributed and $q(\boldsymbol\Delta_{\mathbf{x}_k}|\mathbf{x}_{k+1})$ be the uniform distribution on the interval $[\mu_1 - \sqrt{3}\sigma, \mu_1 + \sqrt{3}\sigma]$, i.e., 
\begin{align}
    p_\theta(\boldsymbol\Delta_{\mathbf{x}_k} | \mathbf{x}_k) = \mathcal{N}\left(\boldsymbol\Delta_{\mathbf{x}_k}; \hat{\mathbf{f}}_\theta\left(\mathbf{x}_k, t_k\right) \Delta_{t_k}, g\left(t_k\right)^2 \Delta_{t_k}\right) \\
    q(\boldsymbol\Delta_{\mathbf{x}_k} | \mathbf{x}_k) = \text{Uniform}\left(\boldsymbol\Delta_{\mathbf{x}_k}; \mathbf{f}_\theta\left(\mathbf{x}_k, t_k\right) \Delta_{t_k}, g\left(t_k\right)^2 \Delta_{t_k}\right).
\end{align}
Then
\begin{equation}
\mathcal{L}_k = w_k \mathbb{E}_{\boldsymbol\epsilon} ||\boldsymbol\epsilon - \boldsymbol\epsilon_\theta(\mathbf{x}_k, t_k)||^2 + C,
\end{equation}
where $w_k := \frac{g(t_k)}{\sigma(t_k)}\sqrt{\Delta_{t_k}}$ and $C = \frac{1}{2}\left(1 + \log \frac{\pi}{6}\right)$.
\end{lem}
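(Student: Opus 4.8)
The plan is to mirror the structure of the proofs of Lemma \ref{thm:normal_limit} and Lemma \ref{thm:laplace_limit}: reduce $\mathcal{L}_k$ to a closed-form KL divergence, substitute the reverse-drift means, and then apply the score reparameterization. The only genuinely new ingredient is the closed form for the KL divergence between a bounded uniform law and a Gaussian, and this is established separately in Appendix \ref{sec:gaussian_unif_kl}; so here it is only a matter of assembling the pieces in the right order.

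First I would record why the interval $[\mu_1 - \sqrt{3}\sigma, \mu_1 + \sqrt{3}\sigma]$ is the correct one: a uniform law on an interval of half-width $\sqrt{3}\sigma$ has variance exactly $\sigma^2$, which is what forces $q(\boldsymbol\Delta_{\mathbf{x}_k}|\mathbf{x}_{k+1})$ to have the second moment $g(t_k)^2\Delta_{t_k}$ required for $\mathbf{x}_k$ to be a structured random walk in the sense of Definition \ref{defn:srw}. Theorem \ref{thm:main} then guarantees that the uniform-increment chain converges in distribution to the target diffusion $\mathbf{x}(t)$ as $\Delta_{t_k}\to 0$, which is precisely what licenses the use of the likelihood bound Eq. \eqref{eq:ll_bound} with this non-Gaussian $q$.

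Next, I would plug the closed form of Appendix \ref{sec:gaussian_unif_kl} into $\mathcal{L}_k = KL\big(q(\boldsymbol\Delta_{\mathbf{x}_k}|\mathbf{x}_{k+1}) \,\|\, p_\theta(\boldsymbol\Delta_{\mathbf{x}_k}|\mathbf{x}_{k+1})\big)$, with $q$ the uniform law and $p_\theta$ the Gaussian. Here it is worth stressing that the KL must be taken in this order: the opposite direction would be infinite because the Gaussian has unbounded support, and this is exactly the reason $q$ (rather than $p_\theta$) is chosen to be uniform. The formula gives
\[
\mathcal{L}_k = \frac{1}{2}\left(\frac{||\boldsymbol\mu_{q,k} - \boldsymbol\mu_{p_\theta,k}||^2}{\sigma(t_k)^2} + \log\frac{\pi}{6} + 1\right),
\]
so the constant $C = \tfrac{1}{2}\big(1 + \log\tfrac{\pi}{6}\big)$ falls out immediately from the $\log\tfrac{\pi}{6}+1$ term. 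Then, with $\boldsymbol\mu_{q,k} = \mathbf{f}_\theta(\mathbf{x}_k,t_k)\Delta_{t_k}$ and $\boldsymbol\mu_{p_\theta,k} = \hat{\mathbf{f}}_\theta(\mathbf{x}_k,t_k)\Delta_{t_k}$, the drift difference is proportional to $g(t_k)^2\big(\nabla_\mathbf{x}\log p(\mathbf{x}_k,t_k) - \mathbf{s}_\theta(\mathbf{x}_k,t_k)\big)\Delta_{t_k}$, exactly as in the proof of Lemma \ref{thm:normal_limit}. Substituting the score parameterization $\boldsymbol\epsilon_\theta(\mathbf{x}_k,t_k) = \sigma(t_k)\mathbf{s}_\theta(\mathbf{x}_k,t_k)$ and $\boldsymbol\epsilon = \sigma(t_k)\nabla_\mathbf{x}\log p(\mathbf{x}_k,t_k)$ converts the squared difference into a constant times $||\boldsymbol\epsilon - \boldsymbol\epsilon_\theta(\mathbf{x}_k,t_k)||^2$; collecting the $g$, $\sigma$, and $\Delta_{t_k}$ factors into $w_k$ and taking $\mathbb{E}_{\boldsymbol\epsilon}$ over $q$ yields $\mathcal{L}_k = w_k\,\mathbb{E}_{\boldsymbol\epsilon}||\boldsymbol\epsilon - \boldsymbol\epsilon_\theta(\mathbf{x}_k,t_k)||^2 + C$, as claimed.

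I expect the only real obstacle to be bookkeeping rather than mathematics: keeping the KL oriented correctly (uniform $\|$ Gaussian, which is finite; Gaussian $\|$ uniform, which is not), and making sure the variance-matching constraint $b_1 = \sqrt{3}\sigma$ is in force so that the Appendix \ref{sec:gaussian_unif_kl} formula — and in particular the "$+1$'' — applies verbatim rather than being an artifact of a mismatched scale. Everything else is the same coordinate-separable, dimension-wise reduction already carried out for the Gaussian and Laplace cases, combined with the moment identities of Lemma \ref{thm:random_walk_mu_sigma}.
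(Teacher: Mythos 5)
Your proposal is correct and follows essentially the same route as the paper: invoke the closed-form KL divergence between the bounded uniform and the Gaussian from Appendix \ref{sec:gaussian_unif_kl} to extract the constant $C = \tfrac{1}{2}\bigl(1 + \log\tfrac{\pi}{6}\bigr)$, then reduce the mean-difference term exactly as in the Gaussian--Gaussian case (Eq. \ref{eq:l2_derivation} of Lemma \ref{thm:normal_limit}) via the score reparameterization. Your explicit remarks on the KL orientation (uniform against Gaussian being the finite direction) and the variance-matching constraint $b_1 = \sqrt{3}\sigma$ are details the paper leaves implicit, but they do not change the argument.
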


\begin{proof}
    Plugging in the closed form solution to the KL divergence between a Gaussian distribution and a Uniform distribution (Appendix \ref{sec:gaussian_unif_kl}), we have
    \begin{align*}
    \mathcal{L}_k &= KL(p(\boldsymbol\Delta_{\mathbf{x}_k}|\mathbf{x}_k) || q(\boldsymbol\Delta_{\mathbf{x}_k}|\mathbf{x}_k)) \\
    &= \mathbb{E}\left[\frac{||\boldsymbol\mu_{p_\theta,k}(\mathbf{x}_k) - \boldsymbol\mu_{q,k}(\mathbf{x}_k)||^2}{2\sigma^2}\right] + C,
\end{align*}
where $C = \frac{1}{2}\left(1 + \log \frac{\pi}{6}\right)$. Since the expectation is the same as Eq. \ref{eq:l2_derivation} in Theorem \ref{thm:normal_limit}, we are done.
\end{proof}

\begin{lem}
\label{thm:laplace_unif_limit}
Let $p_\theta(\boldsymbol\Delta_{\mathbf{x}_k}|\mathbf{x}_{k+1})$ be Laplace distributed and $q(\boldsymbol\Delta_{\mathbf{x}_k}|\mathbf{x}_{k+1})$ be the uniform distribution on the interval $[\mu_1 - \sqrt{3}\sigma, \mu_1 + \sqrt{3}\sigma]$, i.e., 
\begin{align}
    p_\theta(\boldsymbol\Delta_{\mathbf{x}_k} | \mathbf{x}_k) = \text{Laplace}\left(\boldsymbol\Delta_{\mathbf{x}_k}; \hat{\mathbf{f}}_\theta\left(\mathbf{x}_k, t_k\right) \Delta_{t_k}, g\left(t_k\right)^2 \Delta_{t_k}\right) \\
    q(\boldsymbol\Delta_{\mathbf{x}_k} | \mathbf{x}_k) = \text{Uniform}\left(\boldsymbol\Delta_{\mathbf{x}_k}; \mathbf{f}_\theta\left(\mathbf{x}_k, t_k\right) \Delta_{t_k}, g\left(t_k\right)^2 \Delta_{t_k}\right).
\end{align}
Then
\begin{equation}
\mathcal{L}_k = 
\begin{cases}
w_k \mathbb{E}_{\boldsymbol\epsilon} ||\boldsymbol\epsilon - \boldsymbol\epsilon_\theta(\mathbf{x}_k, t_k)||^2 + \frac{1}{2} & \mu_2 \in [\mu_1 - \sqrt{3}\sigma, \mu_1 + \sqrt{3}\sigma] \\
v_k \mathbb{E}_{\boldsymbol\epsilon} ||\boldsymbol\epsilon - \boldsymbol\epsilon_\theta(\mathbf{x}_k, t_k)||^1 & \mu_2 \notin [\mu_1 - \sqrt{3}\sigma w_k, \mu_1 + \sqrt{3}\sigma w_k] \\
\end{cases},
\end{equation}
where $w_k := \frac{g(t_k)}{\sigma(t_k)}\sqrt{\Delta_{t_k}}$ and $v_k := \sqrt{w_k}$.
\end{lem}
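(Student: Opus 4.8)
The plan is to mirror the proofs of Lemmas~\ref{thm:normal_limit}--\ref{thm:gauss_unif_limit}: substitute the closed form of the relevant KL divergence into the likelihood term $\mathcal{L}_k$ from Eq.~\eqref{eq:ll_bound}, rewrite the gap between the two increment means in terms of the score error $\mathbf{r}_k := \boldsymbol\epsilon - \boldsymbol\epsilon_\theta(\mathbf{x}_k,t_k)$, and simplify. By Eq.~\eqref{eq:ll_bound} the divergence appearing in $\mathcal{L}_k$ is $KL(q(\boldsymbol\Delta_{\mathbf{x}_k} | \mathbf{x}_k) || p_\theta(\boldsymbol\Delta_{\mathbf{x}_k} | \mathbf{x}_k))$ with $q$ uniform of half-width $b_1 = \sqrt{3}\sigma$ centered at $\mu_q = \mathbf{f}_\theta(\mathbf{x}_k,t_k)\Delta_{t_k}$ and $p_\theta$ Laplace of scale $b_2$ centered at $\mu_{p_\theta} = \hat{\mathbf{f}}_\theta(\mathbf{x}_k,t_k)\Delta_{t_k}$, where $b_1, b_2$ are fixed by matching the increment variance $g(t_k)^2\Delta_{t_k}$. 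This is exactly the quantity computed in Appendix~\ref{sec:laplace_unif_kl} (its $p$ and $q$ playing the roles of our $q$ and $p_\theta$, respectively), which yields the two-branch formula
\begin{equation*}
KL = \begin{cases} \log\dfrac{b_2}{b_1} + \dfrac{(\mu_q-\mu_{p_\theta})^2 + b_1^2}{2b_1 b_2}, & \mu_{p_\theta}\in[\mu_q-b_1,\mu_q+b_1],\\[6pt] \log\dfrac{b_2}{b_1} + \dfrac{|\mu_q-\mu_{p_\theta}|}{b_2}, & \mu_{p_\theta}\notin[\mu_q-b_1,\mu_q+b_1].\end{cases}
\end{equation*}

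Next I would substitute the reverse drifts exactly as in the proof of Lemma~\ref{thm:normal_limit}, so that $\mu_q-\mu_{p_\theta}$ becomes a fixed multiple of $g(t_k)^2\Delta_{t_k}(\nabla_\mathbf{x}\log p(\mathbf{x}_k)-\mathbf{s}_\theta(\mathbf{x}_k,t_k))$, and then apply the parameterization $\boldsymbol\epsilon_\theta(\mathbf{x},t)=\sigma(t_k)\mathbf{s}_\theta(\mathbf{x},t)$ to turn this into a multiple of $\mathbf{r}_k$. In the first branch the $(\mu_q-\mu_{p_\theta})^2$ term reproduces the quadratic loss $w_k\,\mathbb{E}_{\boldsymbol\epsilon}||\mathbf{r}_k||^2$ already obtained in Eq.~\eqref{eq:l2_derivation}, while the $b_1^2$ term together with $\log\frac{b_2}{b_1}$ collapses --- after inserting $b_1=\sqrt{3}\sigma$ and the matched Laplace scale --- to the additive constant recorded in the statement. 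In the second branch the $|\mu_q-\mu_{p_\theta}|$ term is precisely the quantity $\mathbf{d}_k$ simplified in Eq.~\eqref{eq:l1_derivation}, so it equals $v_k\,\mathbb{E}_{\boldsymbol\epsilon}||\mathbf{r}_k||_1$ with $v_k=\sqrt{w_k}$, the residual $\log\frac{b_2}{b_1}$ being absorbed. Finally I would transport the branch condition $\mu_{p_\theta}\in[\mu_q-b_1,\mu_q+b_1]$ from increment space to $\boldsymbol\epsilon$-space, identifying it with the event $\boldsymbol\epsilon_\theta(\mathbf{x},t)\in A$, $A=[\mu_1-\sqrt{3}\sigma w_k,\ \mu_1+\sqrt{3}\sigma w_k]$, by clearing the constant factors relating the two intervals.

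\textbf{The main obstacle} is bookkeeping rather than conceptual content. First, the formula of Appendix~\ref{sec:laplace_unif_kl} is genuinely piecewise only in one dimension, so on $\mathbb{R}^d$ the divergence is a coordinatewise sum in which each coordinate contributes a quadratic or a linear term according to whether that coordinate of $\boldsymbol\epsilon_\theta$ lies in $A$; stating a clean two-case loss requires reading $\mathcal{L}_k$ coordinatewise (equivalently, noting that in the regimes of interest --- $t_k\to 0$, where $A$ collapses to a point, and $||\mathbf{r}_k||$ large --- the model output lies entirely outside $A$), and I would spell this reduction out. Second, the constants must be carried through two incompatible-looking variance-matching conventions ($b_1=\sqrt{3}\sigma$ for the uniform versus $b_2$ for the Laplace), through the distinct per-step scale $g(t_k)\sqrt{\Delta_{t_k}}$ and cumulative scale $\sigma(t_k)$ entering the score reparameterization, and through the normalization $w_k=\frac{g(t_k)}{\sigma(t_k)}\sqrt{\Delta_{t_k}}$, $v_k=\sqrt{w_k}$ adopted in this lemma (which differs from the $w_k$ of Lemma~\ref{thm:normal_limit}); making these cancellations land exactly on the stated additive $\tfrac12$ is the only delicate step, and everything else follows by directly quoting Eqs.~\eqref{eq:l2_derivation} and~\eqref{eq:l1_derivation}.
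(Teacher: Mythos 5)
Your proposal follows essentially the same route as the paper's own proof: plug the closed-form Uniform--Laplace KL divergence from Appendix~\ref{sec:laplace_unif_kl} into $\mathcal{L}_k$, reduce the mean differences to $\mathbf{r}_k$ exactly as in Eqs.~\eqref{eq:l2_derivation} and~\eqref{eq:l1_derivation}, and translate the branch condition on $\mu_2$ into the event $\boldsymbol\epsilon_\theta(\mathbf{x},t)\in A$. If anything, your flagged obstacles (the coordinatewise reading of the piecewise formula in $\mathbb{R}^d$ and the variance-matching constants producing the additive $\tfrac12$) are handled more carefully than in the paper, which passes over both points without comment.
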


\begin{proof}
    Plugging in the closed form solution to the KL divergence between a Laplace distribution and a Uniform distribution (Appendix \ref{sec:laplace_unif_kl}), we have
    \begin{align*}
    \mathcal{L}_k &= \begin{cases}
    \mathbb{E} \left[\frac{||\boldsymbol\mu_{p_\theta,k}(\mathbf{x}_k) - \boldsymbol\mu_{q,k}(\mathbf{x}_k)||^2}{\sigma^2}\right] + \frac{1}{2} & \mu_2 \in [\mu_1 - \sqrt{3}\sigma, \mu_1 + \sqrt{3}\sigma] \\
    \mathbb{E} \left[\frac{||\boldsymbol\mu_{p_\theta,k}(\mathbf{x}_k) - \boldsymbol\mu_{q,k}(\mathbf{x}_k)||^1}{\sigma} \right] & \mu_2 \notin [\mu_1 - \sqrt{3}\sigma, \mu_1 + \sqrt{3}\sigma] \\
    \end{cases}.
\end{align*}
We observe that the event $\mu_2 \in [\mu_1 - \sqrt{3}\sigma, \mu_1 + \sqrt{3}\sigma]$ is equivalent to the event $\boldsymbol\epsilon(\mathbf{x}, t) \in [\boldsymbol\epsilon - \sqrt{3}\sigma w_k, \mu_1 + \sqrt{3}\sigma w_k]$. Using Eqs. \ref{eq:l2_derivation} and \ref{eq:l1_derivation} from Theorems \ref{thm:normal_limit} and \ref{thm:laplace_limit} respectively, we are done. 
\end{proof}

\section{Proofs}

\subsection{Simple Properties of Structured Random Walks}
We show several immediate properties of structured random walks discussed in Section \ref{sec:srw}.

\srwmoments*

\begin{proof}
We first show the derivation for $\mathbb{E} [\mathbf{x}_k]$. Observe that
\begin{align*}
    \mathbb{E} [\mathbf{x}_k] 
    &= \mathbb{E} [\mathbf{x}_{k-1} + \boldsymbol\Delta_{\mathbf{x}_k}] \\
    &= \mathbb{E} \left[\mathbf{x}_{k-1} + \mathbf{f}\left( \mathbf{x}_{k-1}, t_k\right) \Delta_{t_k} + g\left(t_k\right)\mathbf{z}_k\sqrt{\Delta_{t_k}}\right] \\
    &= \mathbb{E}[\mathbf{x}_{k - 1}]\left(1 + \beta\left(t_k\right) \Delta_{t_k}\right).
\end{align*}
Applying this operation $k - 1$ more times, we obtain
\begin{equation*}
    \mathbb{E} [\mathbf{x}_k] = \mathbb{E} [\mathbf{x}_0] \prod_{i=1}^k \left(1 + \beta\left(t_i\right) \Delta_{t_k}\right).
\end{equation*}
Turning to $\Var(\mathbf{x}_k)$, we first note that
\begin{align*}
    \mathbb{E} [\mathbf{x}_k^2] 
    &= \mathbb{E}[(\mathbf{x}_{k-1} + \boldsymbol\Delta_{\mathbf{x}_k})^2] \\
    &= \mathbb{E}[\mathbf{x}_{k-1}^2] + \mathbb{E}[\boldsymbol\Delta_{\mathbf{x}_k}^2] + 2\mathbb{E}[\mathbf{x}_{k-1}\boldsymbol\Delta_{\mathbf{x}_k}]
\end{align*}
where
\begin{align*}
    \mathbb{E}[\mathbf{x}_{k-1}\boldsymbol\Delta_{\mathbf{x}_k}] 
    &= \mathbb{E}\left[\mathbf{x}_{k-1} \left( \mathbf{f}\left( \mathbf{x}_{k-1}, t_k\right) \Delta_{t_k} + g\left(t_k\right)\mathbf{z}_k\sqrt{\Delta_{t_k}} \right) \right] \\
    &= \beta\left(t_k\right) \Delta_{t_k}\mathbb{E}\left[\mathbf{x}_{k-1}^2\right] + g\left(t_k\right) \sqrt{\Delta_{t_k}} \mathbb{E}\left[\mathbf{x}_{k-1} \mathbf{z}_k \right] \\
    &= \beta\left(t_k\right) \Delta_{t_k}\mathbb{E}\left[\mathbf{x}_{k-1}^2\right]
\end{align*}
and
\begin{align*}
    \mathbb{E}[\boldsymbol\Delta_{\mathbf{x}_k}^2] 
    &= \mathbb{E}\left[\left( \mathbf{f}\left( \mathbf{x}_{k-1}, t_k\right) \Delta_{t_k}\right)^2 + \left(g\left(t_k\right)\mathbf{z}_k\sqrt{\Delta_{t_k}} \right)^2 + 2\left( \mathbf{f}\left( \mathbf{x}_{k-1}, t_k\right) \Delta_{t_k}\right)\left(g\left(t_k\right)\mathbf{z}_k\sqrt{\Delta_{t_k}} \right) \right] \\
    &= \beta\left(t_k\right)^2 \Delta_{t_k}^2 \mathbb{E}\left[\mathbf{x}_{k-1}^2\right] + g\left(t_k\right)^2 \Delta_{t_k} \mathbb{E}\left[\mathbf{z}_k^2 \right] + \beta\left(t_k\right) g\left(t_k\right) \Delta_{t_k}^\frac{3}{2} \mathbb{E}[\mathbf{x}_{k-1}\mathbf{z}_k]\\
    &= \beta\left(t_k\right)^2 \Delta_{t_k}^2 \mathbb{E}\left[\mathbf{x}_{k-1}^2\right] + g\left(t_k\right)^2 \Delta_{t_k}
\end{align*}
Putting things together, we have
\begin{align*}
    \mathbb{E} [\mathbf{x}_k^2] 
    &= \mathbb{E}[(\mathbf{x}_{k-1} + \boldsymbol\Delta_{\mathbf{x}_k})^2] \\
    &= \mathbb{E}[\mathbf{x}_{k-1}^2] + \mathbb{E}[\boldsymbol\Delta_{\mathbf{x}_k}^2] + 2\mathbb{E}[\mathbf{x}_{k-1}\boldsymbol\Delta_{\mathbf{x}_k}] \\
    &= \mathbb{E}[\mathbf{x}_{k-1}^2] \left(1 + \beta\left(t_k\right)^2 \Delta_{t_k}^2  + 2\beta\left(t_k\right) \Delta_{t_k} \right) + g\left(t_k\right)^2 \Delta_{t_k} \\
    &= \mathbb{E}[\mathbf{x}_{k-1}^2] \left(1 + \beta\left(t_k\right) \Delta_{t_k} \right)^2 + g\left(t_k\right)^2 \Delta_{t_k}.
\end{align*}
This gives, by induction,
\begin{align*}
    \mathbb{E} [\mathbf{x}_k^2] &= \mathbb{E}[\mathbf{x}_0^2] \prod_{i=1}^k \left(1 + \beta\left(t_i\right) \Delta_{t_k} \right)^2 + \sum_{j=1}^k \prod_{i=j + 1}^k \left(1 + \beta\left(t_i\right) \Delta_{t_i} \right)^2 g\left(t_j\right)^2 \Delta_{t_j}.
\end{align*}
Finally, we can write
\begin{align*}
    \Var(\mathbf{x}_k) 
    &= \mathbb{E} [\mathbf{x}_k^2] -  \mathbb{E} [\mathbf{x}_k]^2\\
    &= \Var(\mathbf{x}_0) \prod_{i=1}^k \left(1 + \beta\left(t_i\right) \Delta_{t_k} \right)^2 + \sum_{j=1}^k \prod_{i=j + 1}^k \left(1 + \beta\left(t_i\right) \Delta_{t_i} \right)^2 g\left(t_j\right)^2 \Delta_{t_j}.
\end{align*}
Assuming that $\Var(\mathbf{x}_0) = 0$, we now have
\begin{equation}
    \boldsymbol\sigma_k^2 := \Var(\mathbf{x}_k) = \sum_{j=1}^k \prod_{i=j + 1}^k \left(1 + \beta\left(t_i\right) \Delta_{t_i} \right)^2 g\left(t_j\right)^2 \Delta_{t_j}.
\end{equation}
\end{proof}

\subsection{Deriving Previous Methods in Our Framework}
\normaldiffusions*

\paragraph{Denoising Diffusion Probabilistic Models} We first examine the forward processes in \cite{ho2020denoising} and \cite{sohl2015deep}, which have the forward Markov chain
\begin{equation}
    p_\theta(\mathbf{x}_{k + 1} | \mathbf{x}_{k}) = \mathcal{N}(\mathbf{x}_{k + 1}; \sqrt{1 - \beta_k} \mathbf{x}_{k}, \beta_k \mathbf{I}),
    \label{eq:ddpm_forward}
\end{equation}
and thus that $\mathbf{x}_{k + 1}$ may be written in terms of $\mathbf{x}_k$ as
\begin{equation}
\mathbf{x}_{k + 1} = \sqrt{1 - \beta_k} \mathbf{x}_k + \sqrt{\beta_k} \boldsymbol\epsilon,
\end{equation}
where $\boldsymbol\epsilon \sim \mathcal{N}(\mathbf{0}, \mathbf{I})$. Subtracting $\mathbf{x}_k$ from both sides and leveraging the fact that $\mathbf{x}_k$ and $\mathbf{x}_{k-1}$ are both normally distributed, we obtain
\begin{equation}
\boldsymbol\Delta_{\mathbf{x}_{k}} = (\sqrt{1 - \beta_k} - 1) \mathbf{x}_k + \sqrt{\beta_k} \boldsymbol\epsilon.
\label{eq:ddpm_derive}
\end{equation}
Now, we see that we can clearly write Eq. (\ref{eq:ddpm_derive}) as a structured random walk (Eq. \ref{eq:structured_random_walk}). Applying Theorem \ref{thm:random_walk_mu_sigma}, we have that
\begin{equation}
    \bar{\alpha}_k = \prod_{i=1}^k (\sqrt{1 - \beta_i}) \hspace{1cm} \bar{\gamma}_k = \sum_{i=1}^k \left(\frac{\bar{\alpha}_k}{\bar{\alpha}_{i + 1}}\right)^2 \beta_i.
\end{equation}
This converges numerically to the form given in \cite{ho2020denoising}
\begin{equation}
    p(\mathbf{x}_k|\mathbf{x}_0) = \mathcal{N}(\mathbf{x}_k; \bar{\alpha}_k \mathbf{x}_0, (1 - \bar{\alpha}^2) \mathbf{I}).
\end{equation}

\paragraph{Variational Diffusion Models} We can obtain a similar closed form solution for the forward process in \cite{kingma2021variational}. The sampling chain of the process can be written as
\begin{equation}
    p(\mathbf{x}_t) = \mathcal{N}\left(\mathbf{x}_{k+1}; \frac{\alpha_{k+1}}{\alpha_{k}} \mathbf{x}_k, \sigma^2_{k+1} - \left(\frac{\alpha_{k+1}}{\alpha_{k}}\right)\sigma_k^2\right),
    \label{eq:vdm_steps}
\end{equation}
where $\alpha_k$ and $\sigma_k$ are related to each other by a monotonic function $\gamma(t)$
\begin{align}
    \alpha_k^2 &= \text{sigmoid}(-\gamma(t)), \\
    \sigma_k^2 &= \text{sigmoid}(\gamma(t)).
\end{align}
According to Eq. \ref{eq:vdm_steps}, $\mathbf{x}_{k+1}$ can be written in terms of $\mathbf{x}_k$ as
\begin{equation}
    \mathbf{x}_{k+1} = \frac{\alpha_{k+1}}{\alpha_{k}} \mathbf{x}_k + \left( \sigma^2_{k+1} - \left(\frac{\alpha_{k+1}}{\alpha_{k}}\right)\sigma_k^2 \right) \boldsymbol\epsilon,
\end{equation}
where $\boldsymbol\epsilon \sim \mathcal{N}(\mathbf{0}, \mathbf{I})$. Subtracting off $\mathbf{x}_k$ on both sides, we obtain
\begin{equation}
    \boldsymbol\Delta_{\mathbf{x}_{k+1}} = \left(\frac{\alpha_{k+1}}{\alpha_{k}} - 1\right) \mathbf{x}_k + \left( \sigma^2_{k+1} - \left(\frac{\alpha_{k+1}}{\alpha_{k}}\right)\sigma_k^2 \right) \boldsymbol\epsilon.
\end{equation}
Now we can once again apply Theorem \ref{thm:random_walk_mu_sigma}, and see that
\begin{equation}
    \bar{\alpha}_k = \prod_{i=1}^k \frac{\alpha_{i+1}}{\alpha_{i}} = \alpha_k
\end{equation}
and
\begin{align}
    \bar{\gamma}_k &= \sum_{i=1}^k \frac{\alpha_k}{\alpha_{i}} \left( \sigma^2_{i+1} - \left(\frac{\alpha_{i}}{\alpha_{i - 1}}\right)\sigma_i^2 \right) \\
    &= \sum_{i=1}^k \frac{\alpha_k}{\alpha_i} \sigma^2_{i+1} - \frac{\alpha_{k}}{\alpha_{i-1}}\sigma_i^2 \\
    &= \sigma_{k}^2,
\end{align}
which agrees with the marginals in \cite{kingma2021variational}.

\subsection{Regularity Conditions}
\label{sec:regularity}
To show our main result, we state the following regularity conditions. Assumptions \ref{asm:lipschitz} and \ref{asm:lin_growth} are standard for finite-step discretizations of SDEs \cite{sarkka2019applied}. Assumption \ref{asm:boundedness} simplifies the subsequent proof for tightness.
\begin{asm}[$\mathbf{f}$ and $g$ are Lipschitz]
\label{asm:lipschitz}
There exists $K > 0$ such that, for any $\mathbf{x}, \mathbf{y} \in \mathbb{R}^d$ and $t, s \in [0, 1]$
\begin{equation}
    ||\mathbf{f}(\mathbf{x}) - \mathbf{f}(\mathbf{y})|| \leq K ||\mathbf{x} - \mathbf{y}||, \hspace{.25in} \text{and} \hspace{.25in} |g(t) - g(s)| \leq K |t - s|.
\end{equation}
\end{asm}
\begin{asm}[Linear growth of $\mathbf{f}$ and $g$]
\label{asm:lin_growth} There exists $K > 0$ such that, for any $\mathbf{x}\in \mathbb{R}^d$ and $t \in [0, 1]$
\begin{equation}
    ||\mathbf{f}(\mathbf{x})|| \leq K (1 + ||\mathbf{x}||), \hspace{.25in} \text{and} \hspace{.25in} |g(t)| \leq K (1 + |t|).
\end{equation}
\end{asm}
\begin{asm}[Integrability of $\mathbf{z}_k$.]
\label{asm:boundedness}
There exists $K \in \mathbb{R}$ such that
\begin{equation}
    \mathbb{E} [||\mathbf{z}_k||^4]) < K.
\end{equation} 
\end{asm}

\subsection{Main Result}
Our theorem below can be seen as a generalization of Donsker's Invariance Principle, and certain parts of the proof resembles that of the original theorem. Differences appear where we can no longer rely on the independence of the increments $\boldsymbol\Delta_{\mathbf{x}_k}$, which is heavily utilized in the original proof. By exploiting the structural properties of Definition \ref{defn:srw}, we can decompose $\mathbf{x}_k$ into a set of auxiliary processes with the same limit, which we can show to converge to $\mathbf{X}$ with techniques borrowed from the strong convergence of SDE solvers and central limit theorems.

\invariance*

\begin{proof}

Using Eq. \ref{eq:structured_random_walk} we may define the continuous-time extension of $\mathbf{x}_k$ as the process
\begin{equation}
    \mathbf{x}_T(t) = \mathbf{x}_0 + \sum_{i=1}^{\lfloor t * T \rfloor} \boldsymbol\Delta^{(T)}_{\mathbf{x}_i} + (t * T  - \lfloor t * T \rfloor) \boldsymbol\Delta^{(T)}_{\mathbf{x}_{\lceil t * T \rceil}},
    \label{eq:interpolation}
\end{equation}
which is produced by linearly interpolating between the iterates of the random walk. We write the increments 
\begin{equation}
    \boldsymbol\Delta^{(T)}_{\mathbf{x}_i} := \mathbf{x}_{k + 1} - \mathbf{x}_k = \mathbf{f}(\mathbf{x}_k, t_k) \Delta^{(T)}_{t_k} + g(t_k) \sqrt{\Delta^{(T)}_{t_k}} Z^{(T)}_k
\end{equation}with the superscript $(T)$ to emphasize its dependence on $T$. We show convergence by invoking the following theorem.

\begin{thm} (Theorem 13.1 from \cite{billingsley2013convergence}.)
\label{thm:billingsley}
Let $\{\mathbf{x}_T\}, \mathbf{x}$ be processes (with associated probability measures $\{\mathcal{P}_T\}, \mathcal{P}$) such that $\mathbf{x}_T$ converges to $\mathbf{x}$ in finite dimensional distributions (f.d.d.), i.e., for any $k$ time steps $t_1, t_2, \dots, t_k$,
\begin{equation}
    (\mathbf{x}_T(t_1), \mathbf{x}_T(t_2), \dots, \mathbf{x}_T(t_k)) \xlongrightarrow{\mathcal{D}} (\mathbf{x}(t_1), \mathbf{x}(t_2), \dots, \mathbf{x}(t_k)).
\end{equation}
If $\{\mathcal{P}_T\}$ are also tight, then $\mathbf{x}_T \Rightarrow_T \mathbf{x}$.
\end{thm}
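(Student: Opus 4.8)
The plan is to prove this classical weak-convergence criterion by the standard relative-compactness argument, treating the two hypotheses as the two ingredients feeding into Prokhorov's characterization. Tightness of $\{\mathcal{P}_T\}$, combined with the fact that the underlying path space (here $D[0,1]^d$ under the Skorokhod topology, which is separable and completely metrizable) is Polish, lets me invoke \textbf{Prokhorov's theorem}: tightness implies relative compactness, so every subsequence of $\{\mathcal{P}_T\}$ contains a further subsequence that converges weakly to \emph{some} limiting measure $\mathcal{Q}$. The entire burden then reduces to identifying every such subsequential limit $\mathcal{Q}$ with $\mathcal{P}$; once that is done, a routine subsequence argument upgrades subsequential convergence to convergence of the full sequence.

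To show $\mathcal{Q} = \mathcal{P}$, I would exploit the finite-dimensional projection maps $\pi_{t_1,\dots,t_k}:\omega \mapsto (\omega(t_1),\dots,\omega(t_k))$. At times where the limiting path is almost surely continuous, these projections are continuous functionals on the path space, so the \textbf{continuous mapping theorem} applies: weak convergence $\mathcal{P}_{T'} \Rightarrow \mathcal{Q}$ along the chosen subsequence forces the pushforwards $\mathcal{P}_{T'}\pi^{-1}$ to converge weakly to $\mathcal{Q}\pi^{-1}$, i.e. the finite-dimensional distributions of $\mathcal{P}_{T'}$ converge to those of $\mathcal{Q}$. But hypothesis (i) asserts that the finite-dimensional distributions of the entire sequence already converge to those of $\mathcal{P}$. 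Since weak limits of a convergent sequence of measures are unique, the finite-dimensional distributions of $\mathcal{Q}$ and $\mathcal{P}$ must coincide on every such tuple of times.

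Finally, I would invoke the fact that the finite-dimensional distributions pin down a probability measure on the path space uniquely: the coordinate maps generate its Borel $\sigma$-algebra, so two measures agreeing on all finite-dimensional cylinder sets are equal. This gives $\mathcal{Q} = \mathcal{P}$. Having shown that every weakly convergent subsequence shares the common limit $\mathcal{P}$, the subsequence principle (if every subsequence admits a further subsequence converging to a fixed limit, then the whole sequence converges to that limit) yields $\mathcal{P}_T \Rightarrow_T \mathcal{P}$, which is the desired conclusion.

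The main obstacle is the continuity-of-projections step in the Skorokhod setting: the coordinate evaluation $\pi_t$ fails to be continuous at paths that jump at $t$, so the continuous mapping theorem is legitimate only at times $t$ that are almost-sure continuity points of the limit $\mathcal{Q}$. The collection of such times is co-countable, hence dense, so the finite-dimensional distributions restricted to them still determine the measure; the care required is to ensure hypothesis (i) is read (or established) along this co-countable set of continuity times rather than at an arbitrary time that might be a jump point. In our application the limit $\mathbf{x}(t)$ is a diffusion with almost surely continuous sample paths, so \emph{every} time is a continuity point and this technicality disappears entirely, which is precisely why the criterion can be applied without further bookkeeping in our setting.
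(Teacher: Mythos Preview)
The paper does not actually prove this statement: it is quoted as Theorem~13.1 from Billingsley and used as a black box, with only the one-line remark that the result ``is made possible by Prohorov's theorem, which connects tightness to relative compactness.'' Your sketch is the standard Prokhorov-plus-subsequence argument (essentially Billingsley's own proof), and your invocation of Prokhorov's theorem matches the paper's sole comment on the matter, so there is nothing to compare.
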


Theorem \ref{thm:billingsley} relates the pointwise weak convergence (of a sequence of marginals of a process) on a finite set of  points to weak convergence of the path measures. This is made possible by Prohorov's theorem, which connects tightness to relative compactness. Thus, to show convergence, we must show two conditions are satisfied: 1) convergence in f.d.d., and 2) tightness of the associated sequence of measures. These are given by the following two lemmas.

\begin{lem}
\label{thm:tightness}
The sequence of measures $\{P_{\mathbf{x}_k}\}_{k=1}^T$ corresponding to the \textbf{structured random walk} $\{\mathbf{x}_k\}_{k=1}^T$ is tight.
\end{lem}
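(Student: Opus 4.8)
The plan is to establish tightness by the classical moment criterion on path space. Because the interpolated process $\mathbf{x}_T(t)$ defined in Eq.~\eqref{eq:interpolation} is piecewise linear, hence continuous, I would work in $C([0,1];\mathbb{R}^d)$ and invoke Billingsley's moment criterion for tightness (\cite{billingsley2013convergence}, Theorem~12.3); tightness of the laws there implies tightness of the induced measures on $D([0,1];\mathbb{R}^d)$. This reduces the lemma to two things: (i) the laws of the initial values $\mathbf{x}_T(0)=\mathbf{x}_0$ are tight, which is immediate since $\mathbf{x}_0$ is fixed; and (ii) a uniform-in-$T$ increment bound $\mathbb{E}\big[\|\mathbf{x}_T(t)-\mathbf{x}_T(s)\|^4\big]\le C\,|t-s|^2$ for all $s\le t$ in $[0,1]$, with $C$ independent of $T$. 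The exponents $4$ and $2=1+1$ are exactly matched to Assumption~\ref{asm:boundedness}.

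A prerequisite is a uniform fourth-moment bound $\sup_T\sup_{0\le k\le T}\mathbb{E}\|\mathbf{x}_k\|^4\le C_0$. I would get this by a discrete Grönwall argument applied to the recursion $\mathbf{x}_{k+1}=\mathbf{x}_k+\mathbf{f}(\mathbf{x}_k,t_k)\Delta_{t_k}+g(t_k)\sqrt{\Delta_{t_k}}\,\mathbf{z}_k$ furnished by Definition~\ref{defn:srw}: expand $\mathbb{E}\|\mathbf{x}_{k+1}\|^4$, note that every term odd in $\mathbf{z}_k$ vanishes because $\mathbf{z}_k$ is independent of $\mathbf{x}_k$ and centered, and control the surviving terms via the linear-growth bound (Assumption~\ref{asm:lin_growth}), $g$ bounded on $[0,1]$, and $\mathbb{E}\|\mathbf{z}_k\|^4<K$. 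This gives $\mathbb{E}\|\mathbf{x}_{k+1}\|^4\le(1+C\Delta_{t_k})\,\mathbb{E}\|\mathbf{x}_k\|^4+C\Delta_{t_k}$, and since $\sum_k\Delta_{t_k}=1$, iteration yields the bound $e^{C}\big(\mathbb{E}\|\mathbf{x}_0\|^4+1\big)$, uniform in $k$ and $T$.

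For the increment bound (ii), fix $s<t$. When $|t-s|$ is smaller than a grid cell, $\mathbf{x}_T(t)-\mathbf{x}_T(s)$ is a deterministic multiple (at most $1$) of one or two increments $\boldsymbol\Delta_{\mathbf{x}_k}$, whose fourth moment is $O(\Delta_{t_k}^2)$ (drift part $O(\Delta_{t_k}^4)$, diffusion part $O(\Delta_{t_k}^2)$), so $\mathbb{E}\|\mathbf{x}_T(t)-\mathbf{x}_T(s)\|^4\le C(t-s)^4/\Delta_{t}^2\le C(t-s)^2$ there. Otherwise, write $\mathbf{x}_T(t)-\mathbf{x}_T(s)$ as the sum over grid steps strictly between $s$ and $t$ plus two fractional end corrections; by convexity it suffices to bound the fourth moment of the main sum, which I split as $D=\sum_i\mathbf{f}(\mathbf{x}_i,t_i)\Delta_{t_i}$ and $M=\sum_i g(t_i)\sqrt{\Delta_{t_i}}\,\mathbf{z}_i$. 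For $D$, Jensen's inequality gives $\|D\|^4\le(t-s)^3\sum_i\|\mathbf{f}(\mathbf{x}_i,t_i)\|^4\Delta_{t_i}$, and linear growth together with the uniform moment bound yield $\mathbb{E}\|D\|^4\le C(t-s)^4\le C(t-s)^2$. For $M$, since $g$ is deterministic the summands are independent and centered, so a vector-valued Rosenthal/Marcinkiewicz--Zygmund inequality gives $\mathbb{E}\|M\|^4\le C\big(\sum_i g(t_i)^2\Delta_{t_i}\,\mathbb{E}\|\mathbf{z}_i\|^2\big)^2+C\sum_i g(t_i)^4\Delta_{t_i}^2\,\mathbb{E}\|\mathbf{z}_i\|^4\le C(t-s)^2+C\Delta_t(t-s)\le C(t-s)^2$, using $\mathbb{E}\|\mathbf{z}_i\|^2=d$, Assumption~\ref{asm:boundedness}, $\sum_i\Delta_{t_i}^2\le\Delta_t(t-s)$, and $\Delta_t\le t-s$ in this regime. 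The end corrections are fractions of a single increment and are absorbed as above. This proves (ii), and tightness follows.

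The main obstacle is bookkeeping rather than conceptual: one must track the two moment contributions (the $O(|t-s|^4)$ drift term and the $O(|t-s|^2)$ diffusion term) and verify that the estimate holds \emph{uniformly over all pairs} $s\le t$, including the regime where $s$ and $t$ fall within a single linear piece of the interpolation, so that Billingsley's criterion applies; and the diffusion-part estimate requires either a clean appeal to a Rosenthal-type inequality in $\mathbb{R}^d$ or a by-hand expansion of $\mathbb{E}\|M\|^4$ with careful cancellation of all cross terms containing an unpaired $\mathbf{z}_i$. Note that Assumption~\ref{asm:lipschitz} is not needed here; only the linear-growth bound and the fourth-moment bound on $\mathbf{z}_k$ enter.
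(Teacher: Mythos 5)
Your proposal is correct and follows essentially the same route as the paper: the Kolmogorov/Billingsley fourth-moment criterion with bound $C|t-s|^2$ uniform in $T$, casework on whether $s$ and $t$ lie in the same or separated grid cells, the drift part controlled by linear growth, and the centered noise sum controlled by a Rosenthal-type fourth-moment estimate (which the paper establishes by an explicit induction rather than citing the inequality). Your discrete Gr\"onwall step supplying $\sup_k \mathbb{E}\,\|\mathbf{x}_k\|^4 < \infty$ is a detail the paper's bound on $\mathbb{E}\,\|\boldsymbol\Delta_{\mathbf{x}_k}\|^4$ needs but does not spell out, so including it is a welcome tightening rather than a deviation.
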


\begin{lem}
\label{thm:fdd}
The continuous-time random walk interpolation $\mathbf{x}_T$ converges in finite dimensional distributions (f.d.d.) to the diffusion process (i.e., solution to Eq. (\ref{eq:ito_sde}) $\mathbf{x}$.
\end{lem}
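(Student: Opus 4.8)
The plan is to exploit the structural fact that in Eq.~\eqref{eq:ito_sde} (and hence in its time-reversal Eq.~\eqref{eq:reverse_sde}) the diffusion coefficient $g(t)$ depends only on $t$, so that the ``noise part'' of a structured random walk decouples from the state. Writing the interpolation Eq.~\eqref{eq:interpolation} as $\mathbf{x}_T(t) = \mathbf{x}_0 + \mathbf{A}_T(t) + \mathbf{M}_T(t)$ up to an $O(\sqrt{\Delta_{t_k}})$ interpolation remainder, where $\mathbf{A}_T(t) = \sum_{k \le n_t} \mathbf{f}(\mathbf{x}_{k-1},t_{k-1})\Delta_{t_k}$ is the accumulated drift and $\mathbf{M}_T(t) = \sum_{k \le n_t} g(t_{k-1})\sqrt{\Delta_{t_k}}\,\mathbf{z}_k$, the key observation is that $\mathbf{M}_T$ is a sum of \emph{independent}, mean-zero increments whose joint law does not involve the $\mathbf{x}_k$, whereas $\mathbf{x}_T$ can be recovered from $\mathbf{M}_T$ by a deterministic continuous solution map. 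This is precisely the ``decomposition into auxiliary processes'' alluded to before the statement.

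In the first step I would show $\mathbf{M}_T \Rightarrow \mathbf{M}$ in $D([0,1];\mathbb{R}^d)$, where $\mathbf{M}(t) := \int_0^t g(s)\,\d{\mathbf{w}(s)}$ is the continuous Gaussian martingale with covariance $\big(\int_0^t g(s)^2\,\d{s}\big)\mathbf{I}$. Since the summands $g(t_{k-1})\sqrt{\Delta_{t_k}}\,\mathbf{z}_k$ form a row-wise independent triangular array, this is a functional central limit theorem: one checks that the quadratic variation converges, $\sum_{k \le n_t} g(t_{k-1})^2\Delta_{t_k} \to \int_0^t g(s)^2\,\d{s}$ (a Riemann sum, using continuity of $g$ from Assumption~\ref{asm:lipschitz} and its boundedness on $[0,1]$ from Assumption~\ref{asm:lin_growth}), together with a Lyapunov/Lindeberg condition such as $\sum_{k\le n_t}\mathbb{E}\|g(t_{k-1})\sqrt{\Delta_{t_k}}\,\mathbf{z}_k\|^4 \le K^2\big(\sup_{t}g(t)\big)^4\sum_k \Delta_{t_k}^2 \to 0$, which uses the uniform fourth-moment bound $\mathbb{E}\|\mathbf{z}_k\|^4 < K$ of Assumption~\ref{asm:boundedness}. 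As the limit has continuous paths, this convergence is in fact uniform on $[0,1]$.

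In the second step I would transfer this to $\mathbf{x}_T$ by a stability argument. Let $\Phi$ map a continuous path $m$ to the unique solution $y$ of $y(t) = \mathbf{x}_0 + \int_0^t \mathbf{f}(y(s),s)\,\d{s} + m(t)$; existence, uniqueness, and sup-norm Lipschitz continuity of $\Phi$ follow from Assumptions~\ref{asm:lipschitz} and \ref{asm:lin_growth} via Gr\"{o}nwall's inequality, and $\Phi(\mathbf{M}) = \mathbf{x}$ because $g$ is state-independent (so the It\^{o} integral $\int_0^\cdot g\,\d{\mathbf{w}}$ coincides pathwise with $\mathbf{M}$). The continuous mapping theorem then gives $\Phi(\mathbf{M}_T) \Rightarrow \mathbf{x}$. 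It remains to bound the Euler consistency error $\mathbf{e}_T := \mathbf{x}_T - \Phi(\mathbf{M}_T)$: subtracting the defining relations, $\mathbf{e}_T(t)$ equals a time integral of $\mathbf{f}(\mathbf{x}_{\lfloor sT\rfloor},t_{\lfloor sT\rfloor}) - \mathbf{f}(\Phi(\mathbf{M}_T)(s),s)$ plus the interpolation remainder; inserting and removing $\mathbf{f}(\mathbf{x}_T(s),s)$, bounding the within-mesh oscillations of $\mathbf{x}_T$ and of $g$ in $L^2$ by $O(\sqrt{\Delta_{t_k}})$ via the moment bounds and Assumption~\ref{asm:lipschitz}, and applying Gr\"{o}nwall once more yields $\mathbb{E}\sup_{t\le1}\|\mathbf{e}_T(t)\|^2 \to 0$. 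Hence $\mathbf{x}_T = \Phi(\mathbf{M}_T) + \mathbf{e}_T \Rightarrow \mathbf{x}$, which in particular implies $(\mathbf{x}_T(t_1),\dots,\mathbf{x}_T(t_m)) \xrightarrow{\mathcal{D}} (\mathbf{x}(t_1),\dots,\mathbf{x}(t_m))$ for all $t_1 < \dots < t_m$; together with Lemma~\ref{thm:tightness} this completes Theorem~\ref{thm:main}.

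The main obstacle is the dependence created by the drift: unlike in Donsker's theorem the increments $\boldsymbol\Delta_{\mathbf{x}_k}$ of Definition~\ref{defn:srw} are not independent, so a CLT cannot be applied to $\mathbf{x}_T$ directly. The decoupling above sidesteps this by isolating the genuinely independent noise $\mathbf{M}_T$ and realizing $\mathbf{x}_T$ as a continuous functional of it modulo a vanishing discretization error, but the price is the consistency estimate $\mathbf{e}_T \to 0$, whose Gr\"{o}nwall bound has to be handled with care since $\mathbf{f}$ is only Lipschitz (not bounded) and the $\mathbf{z}_k$ possess only four moments. A more hands-on alternative computes the joint characteristic function of $(\mathbf{x}_T(t_1),\dots,\mathbf{x}_T(t_m))$ by conditioning successively on $\mathbf{z}_1,\dots,\mathbf{z}_{n_{t_m}}$ and invoking a martingale CLT, but it re-derives essentially the same Gr\"{o}nwall and CLT ingredients with heavier bookkeeping.
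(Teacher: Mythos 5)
Your proposal is correct in outline, but it takes a genuinely different route from the paper. The paper handles the non-Gaussianity step-by-step: it introduces an auxiliary walk $\mathbf{x}_{T,S}$ in which the noise of each increment is split into $S$ independent pieces, invokes the classical CLT and Slutsky's theorem (Lemma~\ref{thm:interpolation_convergence}) to pass, as $S\to\infty$, to the Gaussianized walk $\widetilde{\mathbf{x}}_T$ with the same drift but normal increments, and then shows $\widetilde{\mathbf{x}}_T \to \mathbf{x}$ in f.d.d.\ by the standard strong-$L^2$ Euler--Maruyama argument (It\^{o} isometry plus Gr\"{o}nwall, Lemma~\ref{thm:l2_convergence}); the delicate point there is the interchange of the $S$- and $T$-limits. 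You instead exploit, explicitly, that $g$ is state-independent: the noise partial sums $\mathbf{M}_T$ decouple from the state, a triangular-array functional CLT (Riemann-sum convergence of the variances plus a Lyapunov condition from Assumption~\ref{asm:boundedness}) gives $\mathbf{M}_T \Rightarrow \int_0^{\cdot} g\,\d{\mathbf{w}}$ in one shot, and the state process is recovered through the pathwise solution map $\Phi$ of the random ODE, which is sup-norm continuous by Gr\"{o}nwall; continuous mapping plus the Euler consistency estimate $\mathbf{e}_T \to 0$ then yields the claim. What your route buys: no double limit, no per-step Gaussianization, and in fact full weak convergence in path space, which would make Lemma~\ref{thm:tightness} redundant for Theorem~\ref{thm:main}. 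What it costs: the consistency step needs uniform-in-$k$ second-moment bounds on $\mathbf{x}_k$ (a discrete Gr\"{o}nwall argument from Assumption~\ref{asm:lin_growth} that you should state explicitly rather than gesture at), and the whole decoupling hinges on the diffusion coefficient depending only on $t$ --- the same structural fact the paper uses implicitly, though your argument makes the dependence transparent and would visibly break for state-dependent $g$. Both proofs otherwise draw on the same ingredients: a CLT to absorb the non-normal noise, Lipschitz and linear-growth control of the drift, and Gr\"{o}nwall.
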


Combining Theorem \ref{thm:billingsley} with Lemmas \ref{thm:tightness} and \ref{thm:fdd}, we obtain our result.
\end{proof}

\begin{proof} (of Lemma \ref{thm:tightness})

The result can be obtained via Kolmogorov's tightness criterion, which provides the following sufficient condition for tightness:

\begin{equation}
    \label{eq:ktc}
    \sup_n \mathbb{E} \left[ ||\mathbf{x}_n(s) - \mathbf{x}_n(t)||^p \right] \leq C |s - t|^{1 + \epsilon}, \hspace{.1in} \text{for some } \epsilon > 0, p \geq 1 + \epsilon.
\end{equation}

We shall demonstrate Eq. \ref{eq:ktc} for $\epsilon=1$, $p=4$. For any $s, t \in [0, T]$, choose $k, \ell$ such that
\begin{equation}
    s \in \Bigg[\frac{k - 1}{n}, \frac{k}{n} \Bigg) \hspace{.1in} \text{and} \hspace{.1in} t \in \Bigg[ \frac{\ell - 1}{n}, \frac{\ell}{n} \Bigg).
\end{equation}

First, observe that, applying Definition \ref{defn:srw}, Assumption \ref{asm:lin_growth}, and the fact that $\mathbf{z}_k \in \mathcal{L}^4 \implies \mathbf{E}[||\mathbf{z}_k||^4] \leq M$ for some $M \in \mathbb{R}$ and all $k \in \{0, \dots, n\}$,
\begin{align*}
    \mathbb{E} [||\mathbf{x}_n(s - \Delta t) - \mathbf{x}_n(s)||_4] 
    &= \mathbb{E} [||\boldsymbol\Delta_{\mathbf{x}_k}||_4] \\
    &= \mathbb{E} \left[ \left|\left|\mathbf{f}(\mathbf{x}_k) \Delta t + g\left(\frac{k}{n}\right) \mathbf{z}_k \sqrt{\Delta t} \right|\right|_4\right] \\
    &\leq \Delta t\mathbb{E} [||\mathbf{f}(\mathbf{x}_k)||_4] + \sqrt{\Delta t} \left|g\left(\frac{k}{n}\right)\right| \mathbb{E}[||\mathbf{z}_k ||_4] ]\\
    &\leq \Delta t\mathbb{E} [K(1 + ||\mathbf{x}_k||_4)] + \sqrt{\Delta t}KM \left(1 + \frac{k}{n}\right)\\
    &\leq C \sqrt{\Delta t}, \numberthis \label{eq:delta_bound}
\end{align*}
where $C_1 \leq \mathcal{O}(\sqrt{\Delta t})$. 

We will bound $\mathbb{E} \left[ ||\mathbf{x}_n(s) - \mathbf{x}_n(t)||^4 \right]$ in three regimes:

\textbf{Case 1:} $k = \ell$
\begin{align*}
    \mathbb{E} \left[ ||\mathbf{x}_n(s) - \mathbf{x}_n(t)||_4 \right] &= \mathbb{E} \left[ \left| \left| \mathbf{x}_0 + \sum_{i=1}^{k - 1} \boldsymbol\Delta_{\mathbf{x}_i} + (ns - k) \boldsymbol\Delta_{\mathbf{x}_k} - \mathbf{x}_0 - \sum_{i=1}^{\ell - 1} \boldsymbol\Delta_{\mathbf{x}_i} - (nt - \ell) \boldsymbol\Delta_\ell \right|\right|_4 \right] \\
    &= \mathbb{E} \left[ \left|\left| (ns - k) \boldsymbol\Delta_{\mathbf{x}_k} - (nt - k) \boldsymbol\Delta_{\mathbf{x}_k} \right|\right|_4 \right] \\
    &\leq (n|t - s|) \mathbb{E} \left[ ||\boldsymbol\Delta_{\mathbf{x}_k}||_4 \right] \\
    &\leq C_1 \sqrt{n} |t - s|,
\end{align*}
where we used Eq. \ref{eq:delta_bound} the fact that $\Delta t = \frac{1}{n}$. Finally, since $k = \ell \implies |t - s| \leq n^{-1}$, we take the fourth power of both sides of the inequality to obtain
\begin{equation}
    \mathbb{E} \left[ ||\mathbf{x}_n(s) - \mathbf{x}_n(t)||^4 \right] \leq C_1 |t - s|^2.
    \label{eq:case1}
\end{equation}

\textbf{Case 2:} $k = \ell + 1$
\begin{align*}
\mathbb{E} \left[ ||\mathbf{x}_n(s) - \mathbf{x}_n(t)||_4 \right] &= \mathbb{E} \left[ \left| \left| \mathbf{x}_0 + \sum_{i=1}^{k - 1} \boldsymbol\Delta_{\mathbf{x}_i} + (ns - k) \boldsymbol\Delta_{\mathbf{x}_k} - \mathbf{x}_0 - \sum_{i=1}^{\ell - 1} \boldsymbol\Delta_{\mathbf{x}_i} - (nt - \ell) \boldsymbol\Delta_\ell \right|\right|_4 \right] \\
&= \mathbb{E} \left[ \left| \left| \boldsymbol\Delta_\ell + (ns - k) \boldsymbol\Delta_{\mathbf{x}_k} - (nt - \ell) \boldsymbol\Delta_\ell \right|\right|_4 \right] \\
&= \mathbb{E} \left[ \left| \left| (ns - k) \boldsymbol\Delta_{\mathbf{x}_k} - (nt - k) \boldsymbol\Delta_\ell \right|\right|_4 \right] \\
&\leq |(ns - nt)| \mathbb{E} \left[ ||\boldsymbol\Delta_{\mathbf{x}_k}||_4 \right] \\
&\leq C_1 \sqrt{n} |t - s|,
\end{align*}
where we again use Eq. \ref{eq:delta_bound} the fact that $\Delta t = \frac{1}{n}$. This time, we have that $|t - s| \leq 2n^{-1}$. Therefore,
\begin{equation}
    \mathbb{E} \left[ ||\mathbf{x}_n(s) - \mathbf{x}_n(t)||^4 \right] = 4C_1 |t - s|^2.
    \label{eq:case2}
\end{equation}

\textbf{Case 3:} $k > \ell + 2$
\begin{align*}
    \mathbb{E} [ ||\mathbf{x}_n(s) &- \mathbf{x}_n(t)||_4 ] \\ 
    &\leq \mathbb{E} \Bigg[ \left|\left|\mathbf{x}_n(s) - \mathbf{x}_n\left(\frac{k - 1}{n}\right)\right|\right|_4 + \left|\left|\mathbf{x}_n\left(\frac{k - 1}{n}\right) - \mathbf{x}_n\left(\frac{\ell - 1}{n}\right)\right|\right|_4 \\
    &\hspace{.4in}+ \left|\left|\mathbf{x}_n(t) - \mathbf{x}_n\left(\frac{\ell - 1}{n}\right)\right|\right|_4 \Bigg] \\
    &\leq C_1 \sqrt{s - \frac{k-1}{n}} + \mathbb{E} \left[\left|\left|\mathbf{x}_n\left(\frac{k - 1}{n}\right) - \mathbf{x}_n\left(\frac{\ell - 1}{n}\right)\right|\right|_4 \right] \\
    &\hspace{.4in} + C_1 \sqrt{t - \frac{\ell-1}{n}} \\
    &\leq C_1 \left(\sqrt{s - \frac{k-1}{n}} - \sqrt{t - \frac{\ell-1}{n}}\right) + \underbrace{\mathbb{E} \left[\left|\left|\mathbf{x}_n\left(\frac{k - 1}{n}\right) - \mathbf{x}_n\left(\frac{\ell - 1}{n}\right)\right|\right|_4 \right]}_{(*)}
\end{align*}
Inspecting $(*)$, we can see that
\begin{align*}
\mathbb{E} \left[\left|\left|\mathbf{x}_n\left(\frac{k - 1}{n}\right) - \mathbf{x}_n\left(\frac{\ell - 1}{n}\right)\right|\right|_4 \right] &= \mathbb{E} \left[ \left| \left| \mathbf{x}_0 + \sum_{i=1}^{k - 1} \boldsymbol\Delta_{\mathbf{x}_i} - \mathbf{x}_0 - \sum_{i=1}^{\ell - 1} \boldsymbol\Delta_{\mathbf{x}_i} \right|\right|_4 \right] \\
&= \mathbb{E} \left[ \left| \left| \sum_{i=\ell - 1}^{k - 1} \boldsymbol\Delta_{\mathbf{x}_i} \right|\right|_4 \right] \\
&\leq \sum_{i = \ell - 1}^{k - 1} \mathbb{E} [\left|\left|\mathbf{f}(\mathbf{x}_i) \Delta t\right|\right|] +  \mathbb{E} \left[\left|\left|\sum_{i = \ell - 1}^{k - 1} g\left(\frac{i}{n}\right) \mathbf{z}_i \sqrt{\Delta t} \right|\right|_4\right] \\
&\leq C_1 \sqrt{\frac{k}{n} - \frac{\ell}{n}} + \mathbb{E} \left[\left|\left|\sum_{i = \ell - 1}^{k - 1} g\left(\frac{i}{n}\right) \mathbf{z}_i \sqrt{\Delta t} \right|\right|_4\right]. \numberthis \label{eq:case3_0}
\end{align*}
We make the following observation about the second term in Eq. \ref{eq:case3_0}.
\begin{lem}
    \begin{equation}
        \mathbb{E} \left[\left|\left|\sum_{i = \ell - 1}^{k - 1} g\left(\frac{i}{n}\right) \mathbf{z}_i \sqrt{\Delta t} \right|\right|^4\right] \leq C_2 \left( \frac{k}{n} - \frac{\ell}{n} \right)^2
        \label{eq:induction_lemma_eq2}
    \end{equation}
\end{lem}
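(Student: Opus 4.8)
The plan is to treat the sum as a weighted sum of \emph{independent}, mean-zero random vectors and bound its fourth moment by a direct expansion, exactly as in the classical proof of Donsker's theorem. Write $m := k - \ell + 1$ for the number of summands and $a_i := g(i/n)\sqrt{\Delta t}$ for the deterministic weights. By Assumption~\ref{asm:lin_growth} together with compactness of $[0,1]$, the quantity $g_{\max} := \sup_{t \in [0,1]} |g(t)|$ is finite, so $|a_i| \le g_{\max}/\sqrt{n}$ for every $i$. By Assumption~\ref{asm:boundedness} and Jensen's inequality, $\mathbb{E}[\|\mathbf{z}_i\|^4] \le M$ and $\mathbb{E}[\|\mathbf{z}_i\|^2]$ equals the fixed finite constant prescribed by $\Var(\mathbf{z}_i) = 1$; and the $\mathbf{z}_i$ are independent with $\mathbb{E}[\mathbf{z}_i] = \mathbf{0}$.

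First I would expand the fourth power of the Euclidean norm as
\begin{equation*}
\Bigg\| \sum_{i} a_i \mathbf{z}_i \Bigg\|^4 = \sum_{i,j,p,q} a_i a_j a_p a_q\, (\mathbf{z}_i \cdot \mathbf{z}_j)(\mathbf{z}_p \cdot \mathbf{z}_q),
\end{equation*}
where all indices range over $\{\ell-1, \dots, k-1\}$. Taking expectations, independence and $\mathbb{E}[\mathbf{z}_i] = \mathbf{0}$ force a term to vanish unless every value appearing in the multiset $\{i,j,p,q\}$ occurs at least twice; hence the only contributions come from (a) all four indices equal, and (b) the three ways of pairing the four slots into two equal pairs, which produce either $\|\mathbf{z}_i\|^2\|\mathbf{z}_p\|^2$ or $(\mathbf{z}_i \cdot \mathbf{z}_j)^2$. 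Type (a) contributes $m$ terms, each at most $(g_{\max}/\sqrt{n})^4\, \mathbb{E}[\|\mathbf{z}_i\|^4] \le g_{\max}^4 M/n^2$. Type (b) contributes at most $3m^2$ terms, each bounded by $(g_{\max}/\sqrt{n})^4 (\mathbb{E}[\|\mathbf{z}\|^2])^2 = O(1/n^2)$, where for the cross terms I would use Cauchy--Schwarz and independence, $\mathbb{E}[(\mathbf{z}_i \cdot \mathbf{z}_j)^2] \le \mathbb{E}[\|\mathbf{z}_i\|^2\|\mathbf{z}_j\|^2] = \mathbb{E}[\|\mathbf{z}_i\|^2]\,\mathbb{E}[\|\mathbf{z}_j\|^2]$.

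Summing the two contributions gives $\mathbb{E}[\|\sum_i a_i \mathbf{z}_i\|^4] \le C(m/n^2 + m^2/n^2)$ for a constant $C$ depending only on $g_{\max}$, $M$, and $\Var(\mathbf{z})$. Since Case~3 imposes $k > \ell + 2$, we have $k - \ell \ge 1$ and hence $m = k-\ell+1 \le 2(k-\ell)$, so $m/n^2 \le m^2/n^2 \le 4(k-\ell)^2/n^2$, and the bound becomes $C_2(k/n - \ell/n)^2$ as claimed. (Equivalently one may invoke the Rosenthal / Marcinkiewicz--Zygmund inequality $\mathbb{E}\|\sum_i \mathbf{Y}_i\|^4 \le C\big(\sum_i \mathbb{E}\|\mathbf{Y}_i\|^4 + (\sum_i \mathbb{E}\|\mathbf{Y}_i\|^2)^2\big)$ with $\mathbf{Y}_i = a_i \mathbf{z}_i$, which yields the same two terms directly.) The main obstacle is purely the bookkeeping in the second step: one must correctly enumerate which index patterns survive the expectation and check that the off-diagonal count is genuinely $O(m^2)$ with each surviving term of order $n^{-2}$. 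This is exactly the place where the independence of the $\mathbf{z}_i$ — which fails for the $\boldsymbol\Delta_{\mathbf{x}_i}$ themselves but holds after the structured decomposition — is indispensable.
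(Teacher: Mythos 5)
Your argument is correct, and it rests on the same mechanism as the paper's proof: independence and mean-zero-ness of the $\mathbf{z}_i$ (part of Definition \ref{defn:srw}) kill every term in which an index appears an odd number of times, so only diagonal and paired second-moment terms survive, each of size $O((\Delta t)^2)$, giving $O((k-\ell)^2(\Delta t)^2)$ overall. The execution differs, though: you expand $\bigl\|\sum_i a_i\mathbf{z}_i\bigr\|^4$ as a quadruple sum and count the surviving index patterns in one shot (equivalently, invoke Rosenthal/Marcinkiewicz--Zygmund with $\mathbf{Y}_i=a_i\mathbf{z}_i$), whereas the paper peels off one summand at a time and runs an induction, adding at most $C_2(k-\ell)(\Delta t)^2$ per step over $k-\ell$ steps. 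Your route makes the constant's dependence on $g_{\max}$, $M$, and $\mathbb{E}\|\mathbf{z}_i\|^2$ explicit and sidesteps the somewhat loose norm-expansion step in the paper's induction (where the surviving cross terms $2\|a\|^2\|b\|^2+4(a\cdot b)^2$ are compressed into a single product term); the paper's induction, in turn, avoids the combinatorial enumeration of which index patterns survive. Two minor points: boundedness of $g$ on $[0,1]$ follows directly from the linear-growth bound $|g(t)|\le K(1+|t|)$ of Assumption \ref{asm:lin_growth}, with no compactness/continuity argument needed; and your handling of $\mathbb{E}\|\mathbf{z}_i\|^2$, whether via $\Var(\mathbf{z}_i)=1$ or via Jensen from Assumption \ref{asm:boundedness}, is harmlessly absorbed into $C_2$, so the bound matches the paper's statement once $m=k-\ell+1\le 2(k-\ell)$ is used as you do.
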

\begin{proof}
Letting $\mathbf{W}_i := g\left(\frac{i}{n}\right) \mathbf{z}_i \sqrt{\Delta t}$, the second term in Eq. \ref{eq:case3_0}, taken to the fourth power, can be written as
\begin{equation}
    \mathbb{E} \left[\left|\left|\sum_{i = \ell - 1}^{k - 1} g\left(\frac{i}{n}\right) \mathbf{z}_i \sqrt{\Delta t} \right|\right|^4\right] 
    = \mathbb{E} \left[\left|\left|\sum_{i = \ell - 1}^{k - 1} \mathbf{W}_i\right|\right|^4\right],
    \label{eq:induction_lemma_eq1}
\end{equation}
where $\mathbb{E}[\mathbf{W}_i] = 0$ and $\mathbb{E}[\mathbf{W}_i^2] = g^2\left(\frac{i}{n}\right) \Delta t$. The result will be shown by induction. Separating an element of the sum and then expanding the norm, we can write this term as
    \begin{align*}
    \mathbb{E} \left[\left|\left|\sum_{i = \ell - 1}^{k - 2} \mathbf{W}_i + W_{k - 1}\right|\right|^4\right] &= \mathbb{E} \left[\left|\left|\sum_{i = \ell - 1}^{k - 2} \mathbf{W}_i \right|\right|^4 + \Bigg|\Bigg|W_{k - 1}\Bigg|\Bigg|^4 + \left|\left|\sum_{i = \ell - 1}^{k - 2} \mathbf{W}_i \right|\right|^2 \Bigg|\Bigg|W_{k - 1}\Bigg|\Bigg|^2\right],
    \intertext{where the odd terms containing first moments of $\mathbf{W}_i$ go to zero. Leveraging Assumption \ref{asm:lin_growth} and the fact that $\mathbf{z}_k \in \mathcal{L}^4$ we can further simplify left hand side to}
    \mathbb{E} \left[\left|\left|\sum_{i = \ell - 1}^{k - 2} \mathbf{W}_i + W_{k - 1}\right|\right|^4\right]
    &\leq \mathbb{E} \left[\left|\left|\sum_{i = \ell - 1}^{k - 2} \mathbf{W}_i \right|\right|^4 \right] + M^4 \Bigg( g^4\left(\frac{k - 1}{n}\right) (\Delta t)^2 \Bigg) \\
    &\hspace{.4in} + \left(\sum_{i = \ell - 1}^{k - 2} M^2 g^2\left(\frac{i}{n}\right) \Delta t \right)\Bigg(g^2\left(\frac{k - 1}{n}\right) \Delta t \Bigg) \\
    &\leq \mathbb{E} \left[\left|\left|\sum_{i = \ell - 1}^{k - 2} \mathbf{W}_i \right|\right|^4 \right] + C_2 (k - \ell) (\Delta t)^2.
\end{align*}
Applying this operation $k - \ell - 1$ more times, we obtain our desired result
\begin{align*}
    \mathbb{E} \left[\left|\left|\sum_{i = \ell - 1}^{k - 1} \mathbf{W}_i \right|\right|^4\right] &\leq C_2(k - \ell)^2(\Delta t)^2 \\
    &= C_2 \left( \frac{k}{n} - \frac{\ell}{n} \right)^2.
\end{align*}
\end{proof}

Assembling the parts, we obtain
\begin{align*}
    \mathbb{E} [ ||\mathbf{x}_n(s) - \mathbf{x}_n(t)||^4 ]
    &\leq C_1 \left[\left(s - \frac{k-1}{n}\right)^2 - \left(t - \frac{\ell-1}{n}\right)^2 + \left(\frac{k}{n} - \frac{\ell}{n} \right)^2 \right] + C_2 \left( \frac{k}{n} - \frac{\ell}{n} \right)^2 \\
    &\leq C_3 |t - s|^2. \numberthis \label{eq:case3}
\end{align*}

Finally, we combine Eqs. \ref{eq:case1}, \ref{eq:case2}, and \ref{eq:case3}, which provides a bound that satisfies Kolmogorov's tightness criterion:
\begin{equation}
    \mathbb{E} [ ||\mathbf{x}_n(s) - \mathbf{x}_n(t)||^4 ] \leq \max(4 C_1, C_3) |t - s|^2.
\end{equation}
\end{proof}

\begin{proof} (of Lemma \ref{thm:fdd})

Let us define the auxiliary processes
\begin{align}
    \widetilde{\mathbf{x}}_{T}(t) = \mathbf{x}_0 + \sum_{i=1}^{\lfloor t * T \rfloor} \widetilde{\boldsymbol\Delta}^{(T)}_{\mathbf{x}_i} + (t * T  - \lfloor t * T \rfloor) \widetilde{\boldsymbol\Delta}^{(T)}_{\mathbf{x}_{\lceil t * T \rceil}} \label{eq:interpolation_normal} \\
    \mathbf{x}_{T,S}(t) = \mathbf{x}_0 + \sum_{i=1}^{\lfloor t * T \rfloor} \boldsymbol\Delta^{(T,S)}_{\mathbf{x}_i} + (t * T  - \lfloor t * T \rfloor) \boldsymbol\Delta^{(T,S)}_{\mathbf{x}_{\lceil t * T \rceil}} \label{eq:interpolation_s},
\end{align}
where
\begin{align}
    \widetilde{\boldsymbol\Delta}^{(T)}_{\mathbf{x}_k} = \mathbf{f}(\mathbf{x}_k, t_k) \Delta_{t_k} + g(t_k) \mathbf{W}(\Delta_{t_k}) \\
    \boldsymbol\Delta^{(T,S)}_{\mathbf{x}_k} = \mathbf{f}(\mathbf{x}_k, t_k) \Delta^{(T)}_{t_k} + \sum_{i=0}^{S-1} g(t_k) \sqrt{\Delta^{(T)}_{t_k}} \mathbf{z}^{(S * T)}_{k + Ti},
\end{align}
and $\mathbf{W}(\Delta_{t_k}) := \mathcal{N}(0, I * \Delta_{t_k})$. Slightly overloading our notation and letting
\begin{equation}
    \mathbf{A}(\{t_i\}_{i=1}^k) := (\mathbf{A}(t_1), \dots, \mathbf{A}(t_k))
\end{equation}
for a diffusion process $\mathbf{A}(t)$ evaluated at times $(t_1, t_2, \dots, t_k)$, we may obtain the desired result by observing that
\begin{align}
    &\lim_{T \rightarrow \infty} \text{CDF}[\mathbf{x}_T(\{t_i\}_{i=1}^k)] \\
    = &\lim_{T \rightarrow \infty} \lim_{S \rightarrow \infty} \text{CDF}[\mathbf{x}_T(\{t_i\}_{i=1}^k) + (\mathbf{x}_{T,S}(\{t_i\}_{i=1}^k) - \mathbf{x}_T(\{t_i\}_{i=1}^k))] \\
    = &\lim_{T \rightarrow \infty} \lim_{S \rightarrow \infty} \text{CDF}[\mathbf{x}_{T,S}(\{t_i\}_{i=1}^k)] \\
    = &\lim_{T \rightarrow \infty} \text{CDF}[\widetilde{\mathbf{x}}_T(\{t_i\}_{i=1}^k)] && \text{Lemma \ref{thm:interpolation_convergence}} \\
    = &\hspace{.08in}\text{CDF}[\mathbf{x}(\{t_i\}_{i=1}^k)]. && \text{Lemma \ref{thm:l2_convergence}}
\end{align}

Next, we may interpret $\widetilde{\mathbf{x}}_{T}(t)$ (Eq. \ref{eq:interpolation_normal}) as a variant of $\mathbf{x}_T(t)$ (Eq. (\ref{eq:interpolation}) with "normalized" increments, which can be formally shown to be the limit of $\mathbf{x}_{T,S}(t)$ (Eq. \ref{eq:interpolation_s}) as $S \rightarrow \infty$ by the central limit theorem.
\begin{lem}
\label{thm:interpolation_convergence}
Let $\mathbf{x}_{T,S}(t)$ and $\widetilde{\mathbf{x}}_{T}(t)$ be defined as above. Then $\mathbf{x}_{T,S}(t)$ converges in f.d.d. to $\widetilde{\mathbf{x}}_{T}(t)$.
\end{lem}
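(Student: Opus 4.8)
The plan is to fix $T$ once and for all and to track the $S$-dependence of $\mathbf{x}_{T,S}(t)$. The drift contributions $\mathbf{f}(\cdot,t_k)\Delta_{t_k}$ do not involve $S$, and the aggregated sub-step noise attached to coarse block $k$ can, after collecting its $S$ independent terms, be written as $g(t_k)\sqrt{\Delta^{(T)}_{t_k}}\,\boldsymbol\xi^{(S)}_k$, where $\boldsymbol\xi^{(S)}_k := S^{-1/2}\sum_{i=0}^{S-1}\mathbf{z}^{(ST)}_{k+Ti}$ is a normalized mean of $S$ independent, mean-zero, unit-variance vectors in $\mathbb{R}^d$. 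Meanwhile $\widetilde{\mathbf{x}}_T$ follows the same recursion with $\boldsymbol\xi^{(S)}_k$ replaced by i.i.d.\ standard Gaussians $\mathbf{G}_k$, because $g(t_k)\mathbf{W}(\Delta_{t_k})$ equals $g(t_k)\sqrt{\Delta_{t_k}}\,\mathbf{G}_k$ in law. Hence it suffices to show that the noise vectors $(\boldsymbol\xi^{(S)}_1,\dots,\boldsymbol\xi^{(S)}_n)$ converge jointly in distribution to $(\mathbf{G}_1,\dots,\mathbf{G}_n)$ (here $n$ is the finite number of coarse steps needed to reach the largest of the query times $t_1 < \dots < t_m$) and then to push this through the deterministic recursion-and-interpolation map.

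First I would prove the single-block CLT: for each fixed $k$, $\boldsymbol\xi^{(S)}_k$ is the normalized sum of $S$ independent, mean-zero, identity-covariance vectors whose fourth moments are uniformly bounded by Assumption~\ref{asm:boundedness}, so the Lyapunov (hence Lindeberg) condition holds and the multivariate Lindeberg--Feller CLT gives $\boldsymbol\xi^{(S)}_k \Rightarrow \mathcal{N}(0,I)$ as $S\to\infty$. Because distinct coarse blocks use disjoint collections of the underlying $\mathbf{z}$'s, the coordinates of $(\boldsymbol\xi^{(S)}_1,\dots,\boldsymbol\xi^{(S)}_n)$ are independent for every $S$, so a product-measure argument upgrades coordinatewise convergence to joint convergence $(\boldsymbol\xi^{(S)}_1,\dots,\boldsymbol\xi^{(S)}_n)\Rightarrow(\mathbf{G}_1,\dots,\mathbf{G}_n)$. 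Next, let $\Phi_T$ be the map sending noise inputs $(\boldsymbol\xi_1,\dots,\boldsymbol\xi_n)$ to the vector of values at $t_1,\dots,t_m$ of the piecewise-linear interpolation (with the interpolation weights appearing in Eq.~\ref{eq:interpolation_s}) of the recursion $\mathbf{y}_{k+1} = \mathbf{y}_k + \mathbf{f}(\mathbf{y}_k,t_k)\Delta_{t_k} + g(t_k)\sqrt{\Delta_{t_k}}\,\boldsymbol\xi_k$ started at $\mathbf{y}_0 = \mathbf{x}_0$. Each step is Lipschitz in $(\mathbf{y}_k,\boldsymbol\xi_k)$ by Assumption~\ref{asm:lipschitz} (together with boundedness of $g$ on $[0,1]$), so the finite composition followed by the affine interpolation makes $\Phi_T$ a fixed continuous function that does not depend on $S$. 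Since the finite-dimensional vector of $\mathbf{x}_{T,S}$ is $\Phi_T(\boldsymbol\xi^{(S)}_1,\dots,\boldsymbol\xi^{(S)}_n)$ and that of $\widetilde{\mathbf{x}}_T$ is $\Phi_T(\mathbf{G}_1,\dots,\mathbf{G}_n)$, the continuous mapping theorem yields the claimed f.d.d.\ convergence.

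I expect the main obstacle to be organizational rather than conceptual: carefully establishing continuity of the composed update map over the \emph{fixed} finite horizon $n$, and verifying the Lindeberg/Lyapunov condition for the sub-step variables, which are assumed only independent with unit variance and \emph{not} identically distributed — this is exactly why the Lindeberg--Feller form is needed and why the uniform fourth-moment bound of Assumption~\ref{asm:boundedness} enters. A minor technical point worth recording is that the arrays $\{\mathbf{z}^{(ST)}\}_S$ indexed by different $S$ need not live on a common probability space or be nested; since the statement concerns distributions only, for each $S$ we may take a fresh independent array with the prescribed marginals without loss of generality.
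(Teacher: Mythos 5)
Your argument is correct, and its core is the same as the paper's: a central limit theorem for the aggregated fine-scale noise within each coarse block, followed by propagation of that convergence through the (fixed-$T$) recursion and interpolation. Where you differ is in the propagation step. The paper's proof establishes the block-wise CLT for $\mathbf{A}_k=\sum_{i} g(t_k)\sqrt{\Delta^{(T)}_{t_k}}\,\mathbf{z}^{(ST)}_{k+Ti}$ and then runs an induction over $k$, invoking Slutsky's theorem at each step to combine $\mathbf{x}^{(T,S)}_k \Rightarrow \widetilde{\mathbf{x}}^{(T)}_k$ with $\mathbf{A}_k \Rightarrow g(t_k)\sqrt{\Delta_{t_k}}\mathbf{W}(\Delta_{t_k})$; you instead prove \emph{joint} convergence of the normalized noise vector $(\boldsymbol\xi^{(S)}_1,\dots,\boldsymbol\xi^{(S)}_n)\Rightarrow(\mathbf{G}_1,\dots,\mathbf{G}_n)$ (Lindeberg--Feller per block via the fourth-moment bound, plus independence across blocks) and then apply the continuous mapping theorem once to the fixed continuous recursion-and-interpolation map $\Phi_T$. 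Your route buys two things the paper's induction handles only implicitly: it accounts cleanly for the drift term $\mathbf{f}(\mathbf{x}^{(T,S)}_k,t_k)\Delta_{t_k}$, whose argument itself depends on $S$ (the paper's displayed Slutsky step writes the update as $\mathbf{x}^{(T,S)}_{k+1}=\mathbf{x}^{(T,S)}_k+\mathbf{A}_k$, omitting the drift, so continuity of $\mathbf{f}$ must be smuggled in anyway), and it delivers joint convergence at several query times directly, rather than marginal convergence step by step. Your normalization $\boldsymbol\xi^{(S)}_k=S^{-1/2}\sum_{i}\mathbf{z}^{(ST)}_{k+Ti}$ also makes explicit the variance bookkeeping that the paper's notation for $\boldsymbol\Delta^{(T,S)}_{\mathbf{x}_k}$ leaves ambiguous, and your closing remark that the arrays for different $S$ need not share a probability space is a legitimate (if minor) point the paper does not address. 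The paper's Slutsky induction is slightly more economical on notation; your version is the more airtight of the two.
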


Finally, the result $\widetilde{\mathbf{x}}_{T} \stackrel{\text{f.d.d.}}{\longrightarrow} \mathbf{x}$ can be shown via techniques that follow closely to the proof for the strong convergence of SDE solvers. For $i \in \{0, \dots, n\}$ and $t \in [0, T]$ we let
\begin{equation}
    \overline{\mathbf{x}}(t) = \sum_{k=1}^n \mathbf{x}_k \mathbbm{1}_{t \in [t_k, t_{k + 1})}(t) \hspace{.2in}
\end{equation}
be the continuous-time c\`{a}dl\`{a}g extensions of the random walk $\mathbf{x}_k$. Now, $\widetilde{\mathbf{x}}_T$ can also be written as the It\^{o} integral
\begin{equation}
    \widetilde{\mathbf{x}}_T
    = \mathbf{x}(0) + \int_0^t f(\overline{\mathbf{x}}(s)) ds + \int_0^t g\left(\frac{\lfloor s * T \rfloor}{n}\right) dW_s.
\end{equation}
Of course, the solution $\mathbf{x}$ to Eq. \ref{eq:ito_sde} can also be expressed in the similar form
\begin{equation}
    \mathbf{x}(t) = \mathbf{x}(0) + \int_0^t f(\mathbf{x}(s)) ds + \int_0^t g(s) dW_s.
\end{equation}
Now we may state the following lemma.
\begin{lem}
\label{thm:l2_convergence}
Let $\widetilde{\mathbf{x}}_T$ be defined as above and Assumption \ref{asm:lipschitz} hold. Then $\widetilde{\mathbf{x}}_T$ converges to $\mathbf{x}$ in finite dimensional distributions (f.d.d.).
\end{lem}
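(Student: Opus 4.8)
The plan is to prove the stronger statement that $\widetilde{\mathbf{x}}_T(t) \to \mathbf{x}(t)$ in $L^2$ for each fixed $t$, from which the claimed f.d.d. convergence follows at once: $L^2$ convergence implies convergence in probability, and for any finite collection of times $t_1,\dots,t_k$ the vector $(\widetilde{\mathbf{x}}_T(t_1),\dots,\widetilde{\mathbf{x}}_T(t_k))$ then converges to $(\mathbf{x}(t_1),\dots,\mathbf{x}(t_k))$ in probability, hence in distribution. Because the increments $\widetilde{\boldsymbol\Delta}^{(T)}_{\mathbf{x}_k} = \mathbf{f}(\mathbf{x}_k,t_k)\Delta_{t_k} + g(t_k)\mathbf{W}(\Delta_{t_k})$ are built from the increments of a single Brownian motion $W$, we may realize $\widetilde{\mathbf{x}}_T$ — which is exactly the Euler--Maruyama scheme for Eq.~\eqref{eq:ito_sde} — and the solution $\mathbf{x}$ on the same probability space driven by that $W$, so the coupling needed for an $L^2$ estimate is available. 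Here $\overline{\mathbf{x}}$ denotes the piecewise-constant c\`{a}dl\`{a}g skeleton with $\overline{\mathbf{x}}(s) = \widetilde{\mathbf{x}}_T(t_k)$ for $s\in[t_k,t_{k+1})$.

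First I would subtract the two It\^{o} representations and estimate in $L^2$. Writing
\begin{equation*}
\widetilde{\mathbf{x}}_T(t) - \mathbf{x}(t) = \int_0^t \bigl(\mathbf{f}(\overline{\mathbf{x}}(s)) - \mathbf{f}(\mathbf{x}(s))\bigr)\,ds + \int_0^t \Bigl(g\bigl(\tfrac{\lfloor sT\rfloor}{n}\bigr) - g(s)\Bigr)\,dW_s,
\end{equation*}
and applying $\|a+b\|^2 \le 2\|a\|^2 + 2\|b\|^2$, Cauchy--Schwarz on the time integral, and the It\^{o} isometry on the stochastic integral gives
\begin{equation*}
\mathbb{E}\|\widetilde{\mathbf{x}}_T(t) - \mathbf{x}(t)\|^2 \le 2t\int_0^t \mathbb{E}\|\mathbf{f}(\overline{\mathbf{x}}(s)) - \mathbf{f}(\mathbf{x}(s))\|^2\,ds + 2\int_0^t \mathbb{E}\Bigl|g\bigl(\tfrac{\lfloor sT\rfloor}{n}\bigr) - g(s)\Bigr|^2\,ds.
\end{equation*}
By Assumption~\ref{asm:lipschitz}, $\bigl|g(\lfloor sT\rfloor/n) - g(s)\bigr| \le K\,|s - \lfloor sT\rfloor/n| \le K\,\Delta_{t_k}$, so the diffusion term is $\mathcal{O}(\Delta_{t_k}^2)$ and vanishes as $T\to\infty$. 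For the drift term, Lipschitz continuity of $\mathbf{f}$ bounds the integrand by $K^2\,\mathbb{E}\|\overline{\mathbf{x}}(s) - \mathbf{x}(s)\|^2 \le 2K^2\mathbb{E}\|\overline{\mathbf{x}}(s) - \widetilde{\mathbf{x}}_T(s)\|^2 + 2K^2\mathbb{E}\|\widetilde{\mathbf{x}}_T(s) - \mathbf{x}(s)\|^2$, where the first term is the within-bin gap between the skeleton and the interpolation and is $\mathcal{O}(\Delta_{t_k})$ by Definition~\ref{defn:srw}, Assumption~\ref{asm:lin_growth}, and a uniform a priori bound $\sup_{T}\sup_{s\le 1}\mathbb{E}\|\widetilde{\mathbf{x}}_T(s)\|^2 < \infty$.

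Collecting terms and writing $\phi_T(t) := \mathbb{E}\|\widetilde{\mathbf{x}}_T(t) - \mathbf{x}(t)\|^2$, we obtain an inequality of the form
\begin{equation*}
\phi_T(t) \le C_1\int_0^t \phi_T(s)\,ds + C_2\,\Delta_{t_k},
\end{equation*}
with $C_1, C_2$ depending only on $K$ and the a priori moment bound. Gr\"{o}nwall's inequality then gives $\phi_T(t) \le C_2\,\Delta_{t_k}\,e^{C_1 t} \to 0$ as $T\to\infty$, which is the desired pointwise $L^2$ convergence; a uniform-in-$t$ version would follow by additionally invoking Doob's maximal inequality on the martingale part, but it is not needed here. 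I expect the main obstacle to be the two auxiliary estimates feeding the Gr\"{o}nwall step: the uniform second-moment bound $\sup_T\sup_s\mathbb{E}\|\widetilde{\mathbf{x}}_T(s)\|^2 < \infty$ and the single-increment bound $\mathbb{E}\|\overline{\mathbf{x}}(s) - \widetilde{\mathbf{x}}_T(s)\|^2 = \mathcal{O}(\Delta_{t_k})$. Both are standard consequences of Assumptions~\ref{asm:lipschitz} and~\ref{asm:lin_growth} (obtained by a preliminary Gr\"{o}nwall argument on $\mathbb{E}\|\widetilde{\mathbf{x}}_T\|^2$ using linear growth), but they require care precisely because $\mathbf{f}$ is only assumed Lipschitz and of linear growth rather than bounded; everything else is a routine Euler--Maruyama strong-convergence computation.
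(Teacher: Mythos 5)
Your proposal is correct and follows essentially the same route as the paper's proof: a strong Euler--Maruyama-type $L^2$ estimate obtained by subtracting the two It\^{o} representations, applying Cauchy--Schwarz, the It\^{o} isometry, and the Lipschitz bounds, closing the bound with Gr\"{o}nwall, and then passing from pointwise $L^2$ convergence to f.d.d. convergence by summing over the finitely many time points. The only difference is cosmetic: you carry out explicitly (via the a priori moment bound and the within-bin skeleton gap) the step the paper delegates to Theorem 4.5.4 of Kloeden and Platen.
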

\end{proof}

\begin{proof} (of Lemma \ref{thm:interpolation_convergence})

Observe that Eq. (\ref{eq:interpolation_normal}) can be seen as the continuous-time interpolation of the random walk
\begin{equation}
    \widetilde{\mathbf{x}}_{k+1}^{(T)} = \widetilde{\mathbf{x}}_k^{(T)} + \mathbf{f}(\widetilde{\mathbf{x}}_k^{(T)}, t_k)\Delta_{t_k} + g(t_k) \sqrt{\Delta_{t_k}} \mathbf{W}(\Delta_{t_k}),
    \label{eq:rw_normal}
\end{equation}
and Eq. (\ref{eq:interpolation_s}) of the random walk
\begin{equation}
    \mathbf{x}_{k+1}^{(T, S)} = \mathbf{x}_k^{(T, S)} + \mathbf{f}(\mathbf{x}_k^{(T, S)}, t_k)\Delta_{t_k} + \sum_{i=1}^S g(t_k) \sqrt{\Delta^{(T)}_{t_k}} \mathbf{z}^{(S * T)}_{k + Ti}.
    \label{eq:rw_s}
\end{equation}
Applying the Central Limit Theorem, we may see that
\begin{align*}
    \mathbf{A}_k := \sum_{i=1}^S g(t_k) \sqrt{\Delta^{(T)}_{t_k}} \mathbf{z}^{(S * T)}_{k + Ti} \stackrel{\mathcal{D}}{\longrightarrow} g(t) \sqrt{\Delta_{t_k}} \mathbf{W}(\Delta_{t_k})
\end{align*}
for each $0 \leq k < T$. We now show our result by recursion. In the base case we have that $\mathbf{x}^{(T,S)}_0 := \widetilde{\mathbf{x}}^{(T)}_0 := \mathbf{x}_0$, so clearly $\mathbf{x}^{(T,S)}_0 \stackrel{\mathcal{D}}{\longrightarrow} \widetilde{\mathbf{x}}^{(T)}_0$. For any subsequent $k + 1 > 0$, we may invoke Slutsky's Theorem on the independent sequences $\mathbf{x}^{(T,S)}_k \stackrel{\mathcal{D}}{\longrightarrow} \widetilde{\mathbf{x}}^{(T)}_k$ and $\mathbf{A}_k \stackrel{\mathcal{D}}{\longrightarrow} g(t) \sqrt{\Delta_{t_k}} \mathbf{W}(\Delta_{t_k})$ to obtain
\begin{equation}
    \mathbf{x}^{(T,S)}_{k + 1} := \mathbf{x}^{(T,S)}_k + \mathbf{A}_k \stackrel{\mathcal{D}}{\longrightarrow} \widetilde{\mathbf{x}}^{(T)}_k + g(t) \sqrt{\Delta_{t_k}} \mathbf{W}(\Delta_{t_k}) =: \widetilde{\mathbf{x}}^{(T)}_{k+1}.
\end{equation}
Therefore, we have that $\mathbf{x}_{k+1}^{(T, S)} \stackrel{\mathcal{D}}{\longrightarrow} \mathbf{x}_{k+1}^{(T)}$ for all $k$. Since Eqs. \ref{eq:interpolation_normal} and \ref{eq:interpolation_s} are purely functions of $t$ and their respective random walks (Eqs. \ref{eq:rw_normal} and \ref{eq:rw_s}), we have our result.
\end{proof}

\begin{proof} (of Lemma \ref{thm:l2_convergence})
Let us define
\begin{equation}
    \boldsymbol\epsilon(t) = \sup_{0 \leq s \leq t} \mathbb{E}\left[ \big|\big|\mathbf{Y}_n(s) - \mathbf{x}(s) \big|\big|^2  \right].
\end{equation}
Recalling the definitions $\overline{\mathbf{x}}(t) := \mathbf{x}_{\lfloor nt \rfloor}$ and $\bar{\mathbf{z}}(t) = \mathbf{z}_{\lfloor nt \rfloor}$, we have
\begin{align*}
    \boldsymbol\epsilon(t) &= \sup_{0 \leq s \leq t} \mathbb{E}\Bigg[\Bigg|\Bigg| \int_0^s [f(\mathbf{x}(u)) - f(\overline{\mathbf{x}}(u))] du + \int_0^s \left[g\left(\frac{\lfloor n \cdot u \rfloor}{n}\right) - g(u)\right] dW_u \Bigg|\Bigg|^2\Bigg] \\
    &\leq 4\sup_{0 \leq s \leq t} \mathbb{E}\Bigg[ \Bigg|\Bigg| \int_0^s [f(\mathbf{x}(u)) - f(\overline{\mathbf{x}}(u))] du \Bigg|\Bigg|^2 + \Bigg|\Bigg| \int_0^s \left[g\left(\frac{\lfloor n \cdot u \rfloor}{n}\right) - g(u)\right] dW_u \Bigg|\Bigg|^2 \Bigg].
    \intertext{Invoking the It\^{o} isometry, Cauchy-Schwarz inequality, and linearity of expectations,}
    \boldsymbol\epsilon(t) &\leq 4\sup_{0 \leq s \leq t} \Bigg( t \mathbb{E} \bigg[ \int_0^s \bigg|\bigg|f(\mathbf{x}(u)) - f(\overline{\mathbf{x}}(u))\bigg|\bigg|^2 du \bigg] + \mathbb{E}\bigg[ \int_0^s \bigg|\bigg| g\left(\frac{\lfloor n \cdot u \rfloor}{n}\right) - g(u) \bigg|\bigg|^2 du \bigg] \Bigg). \\
    \intertext{We now leverage Assumption \ref{asm:lipschitz} to obtain}
    \boldsymbol\epsilon(t) &\leq 4K\sup_{0 \leq s \leq t} \Bigg(t \mathbb{E} \bigg[ \int_0^s \bigg|\bigg|\mathbf{x}(u)) - \overline{\mathbf{x}}(u)\bigg|\bigg|^2 du \bigg] + \mathbb{E}\bigg[ \int_0^s \frac{1}{n^2} du \bigg] \Bigg).
\end{align*}
\begin{align*}
    \intertext{Applying Theorem 4.5.4 in \cite{kloeden1992stochastic} and folding all constants that depend on $T, \mathbb{E}[X_0], K$ into $C$, we have}
    \boldsymbol\epsilon(t) &\leq C \Bigg(\int_0^s \boldsymbol\epsilon(u) du + \Delta t \Bigg), \\
\end{align*}
which, by Gronwall's inequality, results in the bound
\begin{equation}
    \sup_{0 \leq s \leq T} \mathbb{E}\left[ \left|\mathbf{Y}_n(s) - \mathbf{x}(s) \right|^2  \right] = \boldsymbol\epsilon(T) \leq C \Delta t.
\end{equation}

Now, fix $k$ and choose times $t_1, \dots, t_k$. We see that
\begin{align*}
    \mathbb{E}[||(\mathbf{Y}_n(t_1), \dots, \mathbf{Y}_n(t_k)) - (\mathbf{Y}(t_1), \dots, \mathbf{Y}(t_k))||^2] 
    &\leq \sum_{i=1}^k \mathbb{E}[||\mathbf{Y}_n(t_i) - \mathbf{Y}(t_i)||^2] \\
    &\leq k \sup_{0 \leq s \leq T} \mathbb{E}\left[ \left|\mathbf{Y}_n(s) - \mathbf{x}(s) \right|^2  \right] \\
    &\leq kC\Delta t \rightarrow 0
\end{align*}
as $\Delta t \rightarrow 0$. This shows $(\mathbf{Y}_n(t_1), \dots, \mathbf{Y}_n(t_k)) \xlongrightarrow{\mathcal{L}_2} (\mathbf{Y}(t_1), \dots, \mathbf{Y}(t_k))$, which implies the desired result.
\end{proof}

\section{Implementation}
\label{sec:implementation}
We use directly with no changes the models and training protocols in \cite{kingma2021variational} to parameterize our score network $\boldsymbol\epsilon(\mathbf{x}, t)$ to evaluate the log-likelihoods of our proposed diffusion models. To evaluate FID, we instead use the architecture and training procedures in \cite{karras2022elucidating}, again with no changes. All training is performed on NVIDIA RTX A6000 GPUs.

\end{document}